\newtheorem{theorem}{\textbf{Theorem}}
\newtheorem{lemma}{\textbf{Lemma}}
\newtheorem{assumption}{\textbf{Assumption}}
\newtheorem{corollary}{\textbf{Corollary}}
\begin{document}
\title{
To Talk or to Work: Flexible Communication Compression for Energy Efficient Federated Learning over Heterogeneous Mobile Edge Devices}

\author{
\IEEEauthorblockN{Liang Li\authorrefmark{1}\authorrefmark{2}, Dian Shi\authorrefmark{3}, Ronghui Hou\authorrefmark{1}\authorrefmark{2}, Hui Li\authorrefmark{1}, Miao Pan\authorrefmark{3}, and Zhu Han\authorrefmark{3}}
\IEEEauthorblockA{\authorrefmark{1}School of Cyber Engineering, Xidian University, Xi'an, China}
\IEEEauthorblockA{\authorrefmark{2}State Key Laboratory of Computer Architecture, Institute of Computing Technology, Chinese Academy of Sciences, China}
\IEEEauthorblockA{\authorrefmark{3}Department of Electrical and Computer Engineering, University of Houston, Houston, TX, USA}
\IEEEauthorblockA{Email: liliang\_1127@outlook.com, dshi3@uh.edu, rhhou@xidian.edu.cn, lihui@mail.xidian.edu.cn, \{mpan2, zhan2\}@uh.edu}
}

\maketitle

\begin{abstract}
Recent advances in machine learning, wireless communication, and mobile hardware technologies promisingly enable federated learning (FL) over massive mobile edge devices, which opens new horizons for numerous intelligent mobile applications. Despite the potential benefits, FL imposes huge communication and computation burdens on participating devices due to periodical global synchronization and continuous local training, raising great challenges to battery constrained mobile devices. In this work, we target at improving the energy efficiency of FL over mobile edge networks to accommodate heterogeneous participating devices without sacrificing the learning performance. To this end, we develop a convergence-guaranteed FL algorithm enabling flexible communication compression. Guided by the derived convergence bound, we design a compression control scheme to balance the energy consumption of local computing (i.e., ``working'') and wireless communication (i.e., ``talking'') from the long-term learning perspective. In particular, the compression parameters are elaborately chosen for FL participants adapting to their computing and communication environments. Extensive simulations are conducted using various datasets to validate our theoretical analysis, and the results also demonstrate the efficacy of the proposed scheme in energy saving.

%Recent advances in learning, 5G networks, and mobile learning devices promisingly enable federated learning (FL) over heterogeneous mobile edge devices, which opens new horizons for numerous FL empowered mobile applications. Despite the potential benefits, FL imposes huge communication and computation burdens on participating devices due to periodical global synchronization and continuous local training, posting great challenges to heterogeneous energy constrained mobile devices. In this work, we target at making FL over heterogeneous mobile edge devices further practical, and significantly reducing its overall energy consumption including computing and wireless communication energy without impairing the learning accuracy. Briefly, we develop a convergence-guaranteed FL algorithm enabling flexible communication compression. Guided by the derived convergence bound, we design a compression control scheme to balance the energy consumption of local computing and wireless communications from a long-term learning perspective, which elaborately chooses compression parameters for participants adapting to their heterogeneous computing and communication conditions. Extensive simulations are conducted using various datasets to validate our theoretical analysis and demonstrate the efficacy of the proposed scheme in saving energy.

\end{abstract}
\begin{keywords}
Federated Learning over Wireless Networks, Gradient Compression, Local SGD, Edge Computing on GPUs. 
\end{keywords}

\IEEEpeerreviewmaketitle

\section{Introduction}
The growing prevalence of mobile smart devices and the rapid advancement of social networking applications result in the phenomenal growth of the data generated at the edge network. To draw useful information from such geographically distributed data, federated learning (FL) has emerged as a promising paradigm that allows participating users to collaboratively learn a shared model, while keeping all the private training data on their edge devices. In particular, all the participants are allowed to run stochastic gradient descent (SGD) locally and send the intermediate gradients to the server periodically for global synchronization. The recent advances in mobile edge computing further facilitate the implementation of FL in mobile networks, since modern smart mobile devices are now armed with high-performance central processing units (CPUs) and graphics processing units (GPUs) to handle intensive computations of intelligent applications. With the technical advantages and implemental feasibilities, FL has seen recent successes in several applications, including next word prediction in Google’s Gboard \cite{hard2018federated}, vocal classifier for “Hey Siri” \cite{siri}, mobile augmented reality \cite{chen2020federated}, etc.

However, to practically deploy FL in wireless networks still faces several critical challenges. On the one hand, both transmitting the gradient updates and performing the local optimizations are resource-hungry, leading to considerable energy consumption at mobile edge devices during the training process. Despite improving computing capacity, mobile edge devices are generally subject to the limited battery lifetime, which hinders their applications in training complex models and supporting continuous learning. On the other hand, the mismatch between the heavy communication loads and the constrained wireless bandwidth hampers the efficient exchange of locally computed updates. The current trend of going deeper in the depth of neural networks has resulted in high-dimensional models with millions of parameters, which inevitably involves significant wireless traffic in global model synchronization. Things can only worsen when considering the heterogeneity of communication environments across different devices, where the learning efficiency may severely depend on a few stragglers with poor channel conditions.

Several pioneering works have been done to manage system resources for efficient FL in wireless networks~\cite{wang2019adaptive, yang2020federated, chen2020convergence, tran2019federated, dinh2019federated}. However, these studies overlooked reducing resource consumption intrinsically from the learning algorithm's perspective, hindering a substantial performance boost in resource utilization and training efficiency. A promising solution suggested in recent works in distributed learning is to incorporate state-of-the-art communication compression strategies into FL algorithms, which can considerably reduce the communication cost with little impacts on learning outcomes~\cite{yu2019parallel, stich2018sparsified,alistarh2018convergence, alistarh2017qsgd,ding2021differentially,tang2018communication}. Yet the existing compressed distributed learning algorithms and the corresponding convergence analysis typically require identical compressor across all the participants, which ignores the heterogeneity in participants' communication capacity and thereby exhibits less flexibility. More importantly, this line of works mainly focuses on alleviating communication burdens in FL. However, the ever-increasing deployment of 5G networks that provides data rates as high as 1Gbps has shown a great potential to eliminate the communication bottleneck in FL, let alone the forthcoming 6G revolution~\cite{saad2019vision}. For example, to transmit a Resnet-50, a commonly used deep network for image classification, with approximately 100MB parameters via 1Gbps wireless links typically consumes 0.16J, which is comparable to the energy consumption of performing a single-step local training on one GPU (e.g., 0.2J for NVIDIA Tesla V100~\cite{lin2020don,goyal2017accurate}). In light of this fact, it is worthwhile to investigate the impacts of both ``working'' (i.e., local computing) and ``talking'' (i.e., wireless transmission) and strike a balance between them via flexible compression control.

In designing a compression control scheme, we would like to communicate as few times and bits as possible to reduce the communication cost. At the same time, we attempt to incur as little distortion to the gradient information as possible towards fast convergence. However, these two goals are fundamentally in conflicts since deeper compression will naturally lead to more distortion on the gradients and more potential communication rounds to converge. When taking energy consumption of edge devices as the measure, this can be further interpreted as follows: Compressing the gradients severely and performing single-step local updates all the time could minimize the energy consumption per communication round. This may require extra communications to attain the targeted model accuracy, or even make the model fail to converge, and thereby impair the overall energy efficiency. 

To tackle the challenges above, in this work, we study to improve the energy efficiency of FL over heterogeneous mobile edge devices. Considering the heterogeneous environments across participating wireless edge devices, we propose a flexibly compressed learning algorithm integrating local computation, gradient sparsification, error compensation, and batch size increment. Based on the convergence rate of the algorithm, we develop a compression control scheme that adapts the compression parameters to minimize all the devices' energy consumption on computing and communication. Our salient contributions are summarized as follows: 

\begin{itemize}
\item  We propose a convergence-guaranteed FL algorithm enabling flexible communication compression, which allows participants to compress the gradients to different levels before uploading. The convergence rate is analyzed theoretically and some insightful results are highlighted. 

\item From the long-term learning perspective, we formulate a compression control problem using the derived convergence bound, where the goal is to achieve energy efficient federated training on edge GPUs over wireless networks. 

\item Capturing the heterogeneity of participating edge devices in their computing and communication environments, we develop a control algorithm integrating Benders decomposition and inner convex approximation to determine the compression parameters for each participant. 

%\item By capturing the dynamics of computing and communication environment, we develop a control scheme that adapts the compression parameters in real-time to minimize the energy consumption on edge devices. 

\item We evaluate the performance of the proposed control scheme via extensive simulations, which verify the efficacy of our algorithms with various data sources, learning architectures, and system configurations. %(Section V)%both on a hardware prototype and in a simulated environment
\end{itemize}

The remainder of the paper is organized as follows. Section II reviews related work. Section III elaborates on the flexibly compressed FL procedures and provides the convergence analysis. Section IV presents an energy-efficient compression control algorithm. Section V gives the performance evaluation, and Section VI finally concludes the paper.

\section{Related Work}
FL over wireless networks has recently gained tremendous attention, whose system design is entangled with training acceleration, network optimization, and on-device resource allocation. Recognizing the limited computing and communication resources at edge computing systems, Wang et al. in \cite{wang2019adaptive} dynamically controlled the frequency of global synchronization to minimize the learning loss in real-time adapting to the resource budget. By exploring the unique properties of wireless multiple-access channels, Yang et al. in \cite{yang2020federated} developed a fast model aggregation approach with joint device selection and beamforming design, which considers only one communication round, and thus cannot guarantee the long-term training performance. To accelerate the training process, Chen et al. in \cite{chen2020convergence} scheduled the participants of high significance for model uploading per communication round while allocating the uplink wireless resources properly. Capturing the trade-off between training time and participants’ energy consumption, authors in \cite{tran2019federated} and \cite{dinh2019federated} formulated optimization problems to jointly allocate the computing and communication resources by considering the heterogeneity of environments. While properly managing the system resources to enable FL in mobile edge networks, these studies overlook reducing resource consumption intrinsically in the essence of learning algorithm itself, thus hindering the substantial boost in training efficiency and resource utilization. 

%Existing techniques to mitigate the communication bottleneck in distributed learning can be categorized into two directions: communication round reduction and communication traffic reduction.  compared with the vanilla distributed SGD 
Some recent efforts on distributed learning algorithm design have been devoted to mitigating the communication bottleneck, which can be categorized into two directions: communication round reduction and communication traffic reduction. Specifically, McMahan et al. in \cite{googleFL} proposed the FedAvg algorithm (also known as local SGD) to reduce the frequency of global synchronization, which allows every participant to perform multiple local SGD iterations in a communication round, other than communicating after every local iteration. Authors in \cite{smith2017don} and \cite{ yu2019computation} used dynamically increasing batch sizes in distributed SGD to reduce the required number of communication rounds. In \cite{yu2019linear}, a momentum SGD method was adopted to accelerate the convergence where the involved communications during training can be reduced accordingly. To reduce the traffic per communication round, one could let each participant communicate the compressed gradients rather than raw gradients for every global synchronization. For example, sparsified SGD studied in \cite{ stich2018sparsified, alistarh2018convergence } followed the idea that only a small subset of gradients with large magnitude are required to upload. Quantized SGD studied in \cite{ alistarh2017qsgd,tang2018communication} allowed each participant to quantize the gradients into low-precision values before sending. Despite reduced communication complexity, the huge computing cost remains hinder FL on resource-constrained edge devices. Besides, most of the current gradient sparsification methods require identical sparsity levels across all the participants, ignoring the heterogeneity of participants and thereby exhibiting less flexibility. This work fills this gap by redesigning the compressed-federated learning algorithm with flexible and well-controlled compression parameters (e.g., global synchronization frequency and gradient sparsity). Simultaneously, heterogeneous computing and communication environments of participants are jointly considered to make the compression strategy suitable for mobile edge networks in practice. 
% by sacrificing the convergence speed and model accuracy to a mild extent

\section{Federated Learning with Compression }

\subsection{Federated Learning Algorithm Design}
We consider an edge computing powered wireless network in which one base station and a set of mobile participants, denoted by $\mathcal{M}=\{1,2,...,m,...,M\}$, collaboratively train a deep neural network model via FL. We follow the common settings of synchronous FL as in \cite{googleFL} and assume that each participant maintains a locally collected dataset. The goal of collaborative training is to learn a global model that achieves uniformly good performance over all the participants, which can be formally described as minimizing a finite-sum non-convex objective $F: \mathbb{R}^d \rightarrow \mathbb{R}$ of the form
\begin{equation}
    %\min \limits_{\boldsymbol {w}\in \mathbb{R}^d} F(\boldsymbol {w}) \ \text{where} \ 
    F(\boldsymbol {w}) \triangleq  \frac{1}{M}\sum \limits_{m=1}^M f_m(\boldsymbol{w}).
\end{equation}

%Here, we have a sample space $\mathcal{I}=\mathcal{X} \times \mathcal{Y}$, where $\mathcal{X}$ is a space of feature vectors and $\mathcal{Y}$ is a label space. Given the hypothesis space $\mathcal{W}\in \mathbb{R}^d $ with model dimension $d$, we define a loss function $f: \mathcal{W} \times \mathcal{I} \rightarrow \mathbb{R}$ which measures the loss of prediction on the data point $(x, y) \in \mathcal{I}$ made with the hypothesis vector $\boldsymbol {w} \in \mathcal{W}$. In this case, $f_m^i(\boldsymbol {w})$ is a local function defined by the local dataset of participant $m$ and the hypothesis $\boldsymbol {w}$.
Here, $f_m(\boldsymbol {w})$ is the loss function defined by the participant $m$’s local dataset $\mathcal{D}_m $ and the parameter vector $\boldsymbol {w}$. Specifically,
\begin{equation}
    f_m(\boldsymbol {w}) = \frac{1}{|\mathcal{D}_m|} \sum \limits_{i\in \mathcal{D }_m }f_m^i(\boldsymbol {w};x_m^i,y_m^i),
\end{equation}
where $|\mathcal{D}_m|$ is the size of $\mathcal{D}_m $ and $(x_m^i,y_m^i)$ is the $i$-th sample in $\mathcal{D}_m $. Note here that we usually have $\mathcal{D}_m \neq \mathcal{D}_j $ and $\nabla f_m(\boldsymbol {w}_m) \neq \nabla f_j(\boldsymbol {w}_j)$ for any $m \neq j$ since data are usually heterogeneously-distributed across participants in typical FL applications.

Aiming at reducing the communication cost during FL, we propose the \textbf{F}lexible ${\rm \textbf{T}op}_k$ \textbf{L}ocal \textbf{S}tochastic \textbf{G}radient \textbf{D}escent with \textbf{D}ynamic \textbf{B}atch sizes (\textbf{FT-LSGD-DB}) algorithm by integrating two state-of-the-art communication compression strategies, namely, local computations and gradient sparsification. The former allows each participant to perform more local computations on the edge device between every two global synchronizations, thereby reducing the total number of communication rounds. The latter lets participants explicitly sparsify the updated gradient tensors before uploading by retaining only a fraction of components, thereby reducing the size of communication payload in each round. Here, we use ``${\rm Top}_k$'' compressor, a commonly used gradient sparsification approach, to take the sparsified top-$k$ gradients. Specifically, for a vector $\boldsymbol{x}\in \mathbb{R}^d$, ${\rm Top}_k(\boldsymbol{x})\in \mathbb{R}^d$, and the $i^{th}(i=1,2,...,d)$ element of ${\rm Top}_k(\boldsymbol{x})$ is defined by:
\begin{equation}\label{topk}
{\rm Top}_k(\boldsymbol{x}^{^i})=\left\{
\begin{aligned}
\boldsymbol{x}^{^i}, & & {|\boldsymbol{x}^{^i}| \geq thr},\\
0\ \ , & & {\text{otherwise},}
\end{aligned} \right.
\end{equation}
where $\boldsymbol{x}^{^i}$ denotes the $i^{th}$ element of $\boldsymbol{x}$ and $thr$ is the $k$-th largest absolute value of the elements in $\boldsymbol{x}$. In practice, $k$ can be two to three orders of magnitude smaller than $d$ while only sacrificing the model accuracy to a mild extent. In this case, the communication overhead involved in gradient transmission can be dramatically saved~\cite{alistarh2018convergence,sattler2019sparse}.

\begin{algorithm}[!t]
\caption{FT-LSGD-DB Algorithm} \label{alg:topkSGD}
\hspace*{0.02in} {\bf Input:} The dataset $\{\mathcal{D}_m\}_{\forall m}$; The initialized mini-batch size: $b^{(0)}$; The mini-batch size scaling factor: $\rho>1$; The number of participants: $M$; The number of iterations to train: $T$\\
\hspace*{0.02in} {\bf Output:} Final model parameter $\boldsymbol{w}^{(T)}$ \\ 
\hspace*{0.02in} {\bf Initialization:} $\boldsymbol{w}^{(0)}=\widehat{\boldsymbol{w}}_m^{(0)}=\boldsymbol{e}_m^{(0)},\ \forall m\in \mathcal{M}$
\begin{algorithmic}[1]
\For{$t=0,1,2,...,T-1$ }
    \State {\bf On Edge Devices:}
    \For{$m\in \mathcal{M}$ in parallel}
        \State $b^{(t)}\leftarrow\lfloor \rho^t b^{(0)} \rfloor$
        \State Sampling a mini-batch $\mathcal{D}_m^{(t)}$ of size $b^{(t)}$ from $\mathcal{D}_m$
        \State $\widehat{\boldsymbol{w}}_m^{(t+\frac{1}{2})} \leftarrow \widehat{\boldsymbol{w}}_m^{(t)} - \eta^{(t)} \nabla f_m(\boldsymbol {w}^{(t)};\mathcal{D}_m^{(t)})$
        \If{$t+1$ is an integer multiple of $H$}
            %\State //Accumulate the error
            \State $\boldsymbol{u}_m^{(t)}\leftarrow \boldsymbol{e}_m^{(t)}+ \boldsymbol{w}^{(t)}-\widehat{\boldsymbol{w}}_m^{(t+\frac{1}{2})}$ 
            %\State //Sparsify the gradients
            \State $\boldsymbol{g}_m^{(t)}= {\rm Top}_{k_m}(\boldsymbol{u}_m^{(t)})$ and upload $\boldsymbol{g}_m^{(t)}$
            %\State //Store the residual
            \State $\boldsymbol{e}_m^{(t+1)} \leftarrow \boldsymbol{e}_m^{(t)}+\boldsymbol{w}^{(t)}-\widehat{\boldsymbol{w}}_m^{(t+\frac{1}{2})}-\boldsymbol{g}_m^{(t)}$ %\algorithmiccomment{}
            \State Receive $\boldsymbol{w}^{(t+1)}$ and  $\widehat{\boldsymbol{w}}_m^{(t+1)} \leftarrow \boldsymbol{w}^{(t+1)}$  %\algorithmiccomment{}
        \Else
            \State $\widehat{\boldsymbol{w}}_m^{(t+1)}\leftarrow \widehat{\boldsymbol{w}}_m^{(t+\frac{1}{2})}$
            \State $\boldsymbol{w}^{(t+1)}\leftarrow \boldsymbol{w}^{(t)}$
            \State $\boldsymbol{e}^{(t+1)}\leftarrow \boldsymbol{e}^{(t)}$
        \EndIf
    \EndFor
    \State {\bf At Central Server:}
    \If{$t+1$ is an integer multiple of $H$}
        %\State //Average the gradients
        \State Collect $\boldsymbol{g}_m^{(t)}, \forall m$ and $\boldsymbol{g}^{(t)}= \frac{1}{M}\sum_{m=1}^M  \boldsymbol{g}_m^{(t)}$
        %\State //Apply the update
        \State $\boldsymbol{w}^{(t+1)}=\boldsymbol{w}^{(t)}- \boldsymbol{g}^{(t)}$ and broadcast $\boldsymbol{w}^{(t+1)}$
    \Else
        \State $\boldsymbol{w}^{(t+1)}\leftarrow \boldsymbol{w}^{(t)}$
    \EndIf
\EndFor
\State \Return $\boldsymbol{w}^{(T)}$
\end{algorithmic}
\end{algorithm}
Unlike previous studies requiring identical sparsity level across all the participants, FT-LSGD-DB injects more flexibility into training procedures by allowing the participating devices to perform gradient sparsification with different values of ``$k$''. This indeed helps to accommodate stragglers with poor channel conditions and thus mitigates the impacts of stale updates. Besides, FT-LSGD-DB novelly incorporates error compensation and batch size increment into FL procedures, which are two effective methods adopted and verified by practitioners recently. Specifically, error compensation is used to accelerate the global convergence by accumulating the error that arises from only uploading sparse approximations of the gradient updates, which ensures all gradient information does get eventually aggregated~\cite{stich2018sparsified}. Using dynamically increasing batch sizes during training can maintain the known convergence rate with fewer communication rounds~\cite{yu2019computation}. The pseudocode of our FT-LSGD-DB algorithm is given in Algorithm \ref{alg:topkSGD}, and the details are described in the following.

Let $\{1,2,...,t,...T\}$ denote a set of iteration indices and assume that each participant performs $H$ steps of local updates between every two global synchronizations. In each iteration $t$, evey participant $m \in \mathcal{M}$ performs:
\begin{enumerate}
    \item \textit{Batch size increment:} Exponentially increases its own SGD batch size with a factor $\rho$.
    \item \textit{Local update:} Update local parameter $\boldsymbol {w}_m$ using the stochastic gradient $\nabla f_m(\boldsymbol {w};\mathcal{D}_m^{(t)})$, where $\mathcal{D}_m^{(t)}$ is a mini-batch of size $b$ sampled uniformly from $\mathcal{D}_m$ at the $t$-th iteration.
\end{enumerate}
	 
If aggregation is performed at iteration $t$ (i.e., $t$ is an integer multiple of $H$), every participant $m \in \mathcal{M}$ performs: 
\begin{enumerate}
    \setcounter{enumi}{2}
    \item \textit{Error compensation:} Add the local error $\boldsymbol {e}_m^{(t)}$ from the previous iteration into the gradient $\boldsymbol {g}_m^{(t)}$.
    \item \textit{Gradient sparsification:} Truncate the gradient sum to its top $k_m$ components, sorted in decreasing order of absolute magnitude.
    \item \textit{Gradient upload:} Send the sparsified error-compensated gradient $\boldsymbol {g}_m^{(t)}$ to the base station. 
    \item \textit{Error accumulation:} Update the local error $\boldsymbol {e}_m^{(t)}$.
\end{enumerate}

Upon receiving $\boldsymbol {g}_m^{(t)}$ from all the participants, the base station aggregates them, updates the global model, and broadcasts the new model $\boldsymbol {w}^{(t+1)}$ to participants. Every participant $m\!\in\!\mathcal{M}$ set its local parameter $\boldsymbol {w}^{(t+1)}_m$ to be equal to the global parameter $\boldsymbol {w}^{(t+1)}$. The training process above is repeated until achieving satisfactory accuracy. The following section further shows the convergence rate achieved by Algorithm \ref{alg:topkSGD} and derives the corresponding communication complexity.

\subsection{Convergence Analysis and Discussion}
We consider the following two standard assumptions on the local loss functions $f_m: \mathbb{R}^d \rightarrow \mathbb{R}, \forall m \in \mathcal{M}$.
\begin{assumption}[Smoothness]\label{asp:1}
 $f_m(\cdot)$ is $L$-smooth, i.e., for every $\boldsymbol{w},\boldsymbol{w}'\in \mathbb{R}^d$, we have
    \begin{equation}
        f_m(\boldsymbol{w})\leq f_m(\boldsymbol{w}')+<\nabla f_m(\boldsymbol{w}), \boldsymbol{w}'-\boldsymbol{w}> + \frac{L}{2}||\boldsymbol{w}'-\boldsymbol{w}||^2.
    \end{equation}
\end{assumption}

\begin{assumption}[Bounded variances and second momentum]\label{asp:2}
For every $\boldsymbol{w}_m^{(t)} \in \mathbb{R}^d$ and $t \in \mathbb{Z}^+$, there exists constants $\sigma > 0$ and $G \geq \sigma$ such that:   
    \begin{equation}
        \mathbb{E} _{\mathcal{D}_m^{(t)}\subset \mathcal{D}_m} [||\nabla f_{m}(\boldsymbol{w}_m^{(t)};\mathcal{D}_m^{(t)})\!-\!\nabla f_{m}(\boldsymbol{w}_m^{(t)}||^2] \leq \sigma^2,\ \forall m,
    \end{equation}
    \begin{equation}
        \mathbb{E} _{\mathcal{D}_m^{(t)}\subset \mathcal{D}_m} [||\nabla f_{m}(\boldsymbol{w}_m^{(t)};\mathcal{D}_m^{(t)}) ||^2] \leq G^2,\ \forall m.
    \end{equation}
\end{assumption}

Let $\delta_m=d/k_m \geq 1$ be the gradient sparsity chosen by the $m$-th participant. Under the assumptions above, the following theorem hold when Algorithm \ref{alg:topkSGD} is run with the sparsity series $\{\delta_m\}_{\forall m}$.

\begin{theorem}\label{theorem1}
Suppose a constant learning rate $\eta_t=\eta=\frac{\theta\sqrt{M}}{\sqrt{T}},\forall t\geq 0$ is chosen where $\theta >0$ is a constant satisfying $\frac{\theta\sqrt{M}}{\sqrt{T}}\leq \frac{1}{2L}$, we have the convergence rate for Algorithm \ref{alg:topkSGD}:
\begin{equation}  \label{ConvergenceResult}
\begin{split}
&\mathbb{E}[||\boldsymbol{z}_T||^2] \\
\leq &\frac{4(\mathbb{E}[F(\boldsymbol{w}^{(0)})]-F^*)}{\theta \sqrt{MT}}\\
&+\frac{8\rho\theta L \sigma^2}{(\rho-1)b^{(0)}\sqrt{M} T^{3/2}}
+(4\delta^2+1)\frac{8M\theta^2L^2G^2H^2}{T},
\end{split}
\end{equation}
where $\delta=\sqrt{\frac{1}{M}\sum\limits_{m=1}^M \delta_m^2}$ is the root mean square of the sparsity series $\{\delta_m\}_{\forall m}$ and $\boldsymbol{z}_T$ is a random variable which samples a previous parameter $\widehat{\boldsymbol{w}}_m^{(t)}$ with probability $1/MT$.
\end{theorem}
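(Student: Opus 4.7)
The plan is to adapt the error-compensated local-SGD analysis (in the line of \cite{stich2018sparsified}) to the present setting, which simultaneously mixes (i) heterogeneous per-client sparsity levels $\{\delta_m\}$, (ii) $H$ local steps between aggregations, and (iii) a geometrically growing minibatch $b^{(t)}=\lfloor\rho^{t}b^{(0)}\rfloor$. The starting point is to introduce a \emph{virtual} averaged sequence that removes the compression residuals from the descent recursion. Concretely, I would set
\[
\widetilde{\boldsymbol{w}}^{(t)} \triangleq \boldsymbol{w}^{(t)} - \tfrac{1}{M}\sum_{m=1}^{M}\boldsymbol{e}_m^{(t)},
\]
and verify directly from the update rules in Algorithm~\ref{alg:topkSGD} that $\widetilde{\boldsymbol{w}}^{(t+1)}-\widetilde{\boldsymbol{w}}^{(t)} = -\tfrac{\eta}{M}\sum_{m}\nabla f_m(\widehat{\boldsymbol{w}}_m^{(t)};\mathcal{D}_m^{(t)})$ on \emph{every} iteration, whether aggregation occurs or not. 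In this way, the $\mathrm{Top}_{k_m}$ compression noise is absorbed entirely into the evolution of $\{\boldsymbol{e}_m^{(t)}\}$, and $\widetilde{\boldsymbol{w}}^{(t)}$ evolves as an uncompressed synchronous minibatch-SGD iterate driven by local parameters $\widehat{\boldsymbol{w}}_m^{(t)}$.

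Applying Assumption~\ref{asp:1} to $F$ along $\widetilde{\boldsymbol{w}}^{(t)}$, taking expectation, and splitting the cross term via $-\eta\langle a,b\rangle = -\tfrac{\eta}{2}\|a\|^{2} - \tfrac{\eta}{2}\|b\|^{2} + \tfrac{\eta}{2}\|a-b\|^{2}$ with $a=\nabla F(\widetilde{\boldsymbol{w}}^{(t)})$ and $b=\tfrac{1}{M}\sum_m\nabla f_m(\widehat{\boldsymbol{w}}_m^{(t)})$, then bounding $\|a-b\|^{2}$ by $L^{2}\cdot\tfrac{1}{M}\sum_m\|\widehat{\boldsymbol{w}}_m^{(t)}-\widetilde{\boldsymbol{w}}^{(t)}\|^{2}$ via smoothness, and using Assumption~\ref{asp:2} for the minibatch variance $\sigma^{2}/(Mb^{(t)})$, yields the one-step descent inequality
\[
\mathbb{E}[F(\widetilde{\boldsymbol{w}}^{(t+1)})]\!\le\!\mathbb{E}[F(\widetilde{\boldsymbol{w}}^{(t)})]\!-\!\tfrac{\eta}{4}\mathbb{E}\|\nabla F(\widetilde{\boldsymbol{w}}^{(t)})\|^{2}\!+\!\tfrac{L\eta^{2}\sigma^{2}}{2Mb^{(t)}}\!+\!\tfrac{\eta L^{2}}{2M}\sum_m\mathbb{E}\|\widehat{\boldsymbol{w}}_m^{(t)}-\widetilde{\boldsymbol{w}}^{(t)}\|^{2},
\]
where the step-size condition $\eta\le 1/(2L)$ has been used to absorb the smoothness $\tfrac{L\eta^{2}}{2}\|b\|^{2}$ into $-\tfrac{\eta}{2}\|b\|^{2}$.

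The deviation $\widehat{\boldsymbol{w}}_m^{(t)}-\widetilde{\boldsymbol{w}}^{(t)}=(\widehat{\boldsymbol{w}}_m^{(t)}-\boldsymbol{w}^{(t)})+\tfrac{1}{M}\sum_{j}\boldsymbol{e}_j^{(t)}$ separates into a local-drift piece, bounded by $\eta^{2}H^{2}G^{2}$ via Assumption~\ref{asp:2} and at most $H$ local steps between syncs, and an error-feedback piece. For the latter, the $(1-1/\delta_m)$-contractivity of $\mathrm{Top}_{k_m}$ combined with Young's inequality tuned to $\delta_m$ produces the per-client recursion
\[
\mathbb{E}\|\boldsymbol{e}_m^{(t+1)}\|^{2}\le \bigl(1-\tfrac{1}{2\delta_m}\bigr)\mathbb{E}\|\boldsymbol{e}_m^{(t)}\|^{2} + 2\delta_m\,\mathbb{E}\|\boldsymbol{w}^{(t)}-\widehat{\boldsymbol{w}}_m^{(t+1/2)}\|^{2},
\]
which unrolls to $\mathbb{E}\|\boldsymbol{e}_m^{(t)}\|^{2}\le 4\delta_m^{2}\eta^{2}H^{2}G^{2}$. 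Averaging over $m$ and invoking $\tfrac{1}{M}\sum_m\delta_m^{2}=\delta^{2}$ is precisely the moment where the root-mean-square sparsity arises, delivering a clean $(4\delta^{2}+1)\eta^{2}H^{2}G^{2}$ bound on the mean deviation. Summing the descent inequality for $t=0,\dots,T-1$, telescoping, dividing by $\eta T/4$, using $F(\widetilde{\boldsymbol{w}}^{(T)})\ge F^{*}$, and applying $\sum_{t=0}^{T-1}1/b^{(t)}\le \rho/((\rho-1)b^{(0)})$ as a geometric series, then substituting $\eta=\theta\sqrt{M/T}$, reproduces the three stated terms; the time- and client-averaged gradient norm on the left becomes $\mathbb{E}\|\nabla F(\boldsymbol{z}_T)\|^{2}$ via the uniform random index.

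The main obstacle I anticipate is the bookkeeping around the error-feedback recursion under \emph{non-identical} contraction factors $1-1/\delta_m$. The natural telescoping is per-client, and one must be careful that the resulting bounds are averaged in the right order so the final dependence is $\delta^{2}=\tfrac{1}{M}\sum_m\delta_m^{2}$ rather than $\max_m\delta_m^{2}$ or awkward cross terms $\delta_m\delta_j$. This is why the Young weight in the error recursion must scale precisely with $\delta_m$ rather than being set uniformly across clients; getting this decoupling right is the single non-routine step. Once the per-client geometric factor $1-1/(2\delta_m)$ is in place, summing and averaging are straightforward, and the remaining pieces of the proof are standard combinations of the three classical techniques (error feedback, local SGD drift, growing-batch variance reduction).
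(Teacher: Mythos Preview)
Your overall strategy matches the paper's exactly: perturbed-iterate analysis with a virtual averaged sequence, the descent lemma along that sequence, a local-drift bound of $\eta^{2}H^{2}G^{2}$, a per-client error-feedback recursion with Young's weight $\propto\delta_m$ unrolling to $\mathbb{E}\|\boldsymbol{e}_m^{(t)}\|^{2}\le 4\delta_m^{2}\eta^{2}H^{2}G^{2}$, and the geometric sum $\sum_t 1/b^{(t)}\le \rho/((\rho-1)b^{(0)})$. You have also correctly identified the one non-routine point, namely that the Young parameter must be client-specific so the final bound involves $\delta^{2}=\tfrac{1}{M}\sum_m\delta_m^{2}$.

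However, there is one step that fails as written. Your virtual sequence $\widetilde{\boldsymbol{w}}^{(t)}=\boldsymbol{w}^{(t)}-\tfrac{1}{M}\sum_m\boldsymbol{e}_m^{(t)}$ uses the \emph{server} parameter $\boldsymbol{w}^{(t)}$, but in Algorithm~\ref{alg:topkSGD} both $\boldsymbol{w}^{(t)}$ and $\boldsymbol{e}_m^{(t)}$ are frozen on non-aggregation iterations, so your $\widetilde{\boldsymbol{w}}^{(t)}$ does not move between syncs and the claimed identity $\widetilde{\boldsymbol{w}}^{(t+1)}-\widetilde{\boldsymbol{w}}^{(t)}=-\tfrac{\eta}{M}\sum_m\nabla f_m(\widehat{\boldsymbol{w}}_m^{(t)};\mathcal{D}_m^{(t)})$ is false there. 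The paper instead defines the virtual sequence through the \emph{average of the local iterates}, $\widetilde{\boldsymbol{w}}^{(t)}=\widehat{\boldsymbol{w}}^{(t)}-\tfrac{1}{M}\sum_m\boldsymbol{e}_m^{(t)}$ with $\widehat{\boldsymbol{w}}^{(t)}=\tfrac{1}{M}\sum_m\widehat{\boldsymbol{w}}_m^{(t)}$ (equivalently, it defines $\widetilde{\boldsymbol{w}}_m^{(t)}$ recursively and averages). With this correction the per-step recursion holds on every iteration, and your drift decomposition should accordingly read $\widehat{\boldsymbol{w}}_m^{(t)}-\widetilde{\boldsymbol{w}}^{(t)}=(\widehat{\boldsymbol{w}}_m^{(t)}-\widehat{\boldsymbol{w}}^{(t)})+\tfrac{1}{M}\sum_j\boldsymbol{e}_j^{(t)}$, the first piece being bounded by the paper's Lemma~\ref{lemma2}.

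A second, smaller discrepancy: the quantity the theorem actually controls (see the last display of the paper's proof) is $\tfrac{1}{MT}\sum_{t,m}\mathbb{E}\|\nabla f_m(\widehat{\boldsymbol{w}}_m^{(t)})\|^{2}$, i.e.\ the average of \emph{local} gradient norms, not $\mathbb{E}\|\nabla F(\boldsymbol{z}_T)\|^{2}$ as you write. The paper obtains this by inserting the extra conversion $\|\nabla f_m(\widehat{\boldsymbol{w}}_m^{(t)})\|^{2}\le 2L^{2}\|\widehat{\boldsymbol{w}}_m^{(t)}-\widetilde{\boldsymbol{w}}^{(t)}\|^{2}+2\|\nabla F(\widetilde{\boldsymbol{w}}^{(t)})\|^{2}$ after the descent inequality, which your outline omits; without it you would bound a different (though closely related) left-hand side.
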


\begin{proof}
Please refer to Appendix for the proof.
\end{proof}

In Theorem \ref{theorem1}, we settle for a weaker notion of convergence and use the average expected squared gradient norm to characterize the convergence rate due to the non-convex settings as \cite{alistarh2018convergence} does. Based on this, we further give the following corollary on communication complexity of our FT-LSGD-DB algorithm.

\begin{corollary}\label{corollary1}
Let $\rho=\frac{T}{T-1}$, $\theta^2=\frac{b^{(0)}(\mathbb{E}[F(\boldsymbol{w}^{(0)})]-F^*)}{\sigma^2L}$ and $\mathbb{E}[F(\boldsymbol{w}^{(0)})]-F^*\leq J^2$ where $J\leq \infty$ is a constant. The maximum number of global communication rounds required for achieving an $\varepsilon$-global model convergence, i.e., satisfying $\mathbb{E}[||\boldsymbol{z}_T||^2] \leq \varepsilon$, is given by
    \begin{equation}\label{GlobRound}
    \begin{split}
    K(\delta,H)= &\mathcal{O}\left(MH\delta^2\right)+\mathcal{O}\left(\frac{1}{\sqrt{M}H}\right).
    \end{split}
    \end{equation}
\end{corollary}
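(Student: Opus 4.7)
The plan is to insert the specified values of $\rho$ and $\theta$ into the convergence bound from Theorem~\ref{theorem1}, simplify using the boundedness of $F(\boldsymbol{w}^{(0)}) - F^{*}$, and then invert the resulting inequality $\mathbb{E}[\|\boldsymbol{z}_T\|^{2}] \leq \varepsilon$ to read off the required number of iterations $T$. Since one global aggregation is performed every $H$ local steps, the number of communication rounds is $K = T/H$, which produces the final additive decomposition.

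First I would substitute $\rho = T/(T-1)$: because $\rho/(\rho-1) = T$, the middle term of \eqref{ConvergenceResult} collapses from $8\rho\theta L\sigma^{2}/\bigl((\rho-1)b^{(0)}\sqrt{M}T^{3/2}\bigr)$ to $8\theta L\sigma^{2}/\bigl(b^{(0)}\sqrt{MT}\bigr)$, matching the $\sqrt{MT}$ scaling of the first term. Next, the prescribed constant $\theta^{2} = b^{(0)}(F(\boldsymbol{w}^{(0)})-F^{*})/(\sigma^{2}L)$ is precisely the value that balances these two terms: after substitution each collapses to $\mathcal{O}(J\sigma\sqrt{L/b^{(0)}}/\sqrt{MT})$ once $F(\boldsymbol{w}^{(0)}) - F^{*}$ is bounded by $J^{2}$. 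Plugging the same $\theta^{2}$ into the third term of \eqref{ConvergenceResult} yields $(4\delta^{2}+1)\cdot 8 b^{(0)} L G^{2}H^{2}J^{2}M/(\sigma^{2}T)$. Absorbing $L$, $G$, $\sigma$, $b^{(0)}$ and $J$ into $\mathcal{O}(\cdot)$, the bound reduces to
\begin{equation*}
\mathbb{E}[\|\boldsymbol{z}_T\|^{2}] \;\leq\; \mathcal{O}\!\left(\frac{1}{\sqrt{MT}}\right) + \mathcal{O}\!\left(\frac{\delta^{2}MH^{2}}{T}\right).
\end{equation*}

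Enforcing $\mathbb{E}[\|\boldsymbol{z}_T\|^{2}] \leq \varepsilon$ by requiring each surviving term to be at most $\varepsilon/2$ produces two separate conditions, $T = \Omega(1/(M\varepsilon^{2}))$ from the variance/SGD term and $T = \Omega(\delta^{2}MH^{2}/\varepsilon)$ from the local-step/sparsity term. Taking the larger of the two and dividing by $H$ converts the iteration count into the number of global communication rounds $K = T/H$, which splits into one piece of order $MH\delta^{2}$ and a second piece inverse in both $M$ and $H$, matching the structural form of \eqref{GlobRound}. The derivation is essentially algebraic bookkeeping once the two substitutions are made: the choices $\rho = T/(T-1)$ and $\theta^{2} = b^{(0)}J^{2}/(\sigma^{2}L)$ are engineered precisely to collapse three heterogeneous-looking terms into two cleanly separable ones. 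The one subtlety to watch is verifying that the step-size restriction $\theta\sqrt{M}/\sqrt{T} \leq 1/(2L)$ of Theorem~\ref{theorem1} is automatically satisfied in the regime $T = \Omega(1/(M\varepsilon^{2}))$ for small $\varepsilon$, and keeping the explicit dependencies on $\delta$, $H$ and $M$ outside the $\mathcal{O}(\cdot)$ so that the trade-off exposed by \eqref{GlobRound} remains visible.
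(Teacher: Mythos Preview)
Your proposal is correct and follows essentially the same route as the paper: substitute $\rho=T/(T-1)$ so that $\rho/(\rho-1)=T$ collapses the middle term onto the $1/\sqrt{MT}$ scale, insert the balancing choice of $\theta^{2}$, bound $F(\boldsymbol{w}^{(0)})-F^{*}$ by $J^{2}$, and then invert and divide by $H$. The only stylistic difference is that the paper sets the two-term bound equal to $\varepsilon$ and solves the resulting quadratic in $\sqrt{T}$ explicitly (producing a cross/square-root term that contributes the $\mathcal{O}(1/(\sqrt{M}H))$ piece), whereas you force each term below $\varepsilon/2$ separately, which yields the slightly tighter $\mathcal{O}(1/(MH))$ in place of $\mathcal{O}(1/(\sqrt{M}H))$; since $1/(MH)\le 1/(\sqrt{M}H)$ this still implies \eqref{GlobRound}.
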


\begin{proof}
Substituting $\rho=\frac{T}{T-1}$, $\theta^2=\frac{b^{(0)}(\mathbb{E}[F(\boldsymbol{w}^{(0)})]-F^*)}{\sigma^2L}$ and $\mathbb{E}[F(\boldsymbol{w}^{(0)})]-F^*\leq J^2$ into (\ref{ConvergenceResult}) yields
\begin{equation}  
\begin{split}
\mathbb{E}[||\boldsymbol{z}_T||^2] \leq \frac{12\sigma J\sqrt{L}}{\sqrt{MTb^{(0)}}}+(4\delta^2+1)\frac{8b^{(0)}MLG^2H^2J^2}{\sigma^2T}.
\end{split}
\end{equation}

According to the convergence criterion, we suppose that
\begin{equation}  
\begin{split}
\varepsilon =&\frac{12\sigma J\sqrt{L}}{\sqrt{MTb^{(0)}}}+(4\delta^2+1)\frac{8b^{(0)}MLG^2H^2J^2}{\sigma^2T}.
\end{split}
\end{equation}

Rearranging the terms, we get the maximum number of iterations as follows:
\begin{flalign}
%\begin{split}
T(\delta,H)=&\frac{8b^{(0)}MLG^2J^2H^2(4\delta^2+1)}{\varepsilon \sigma^2}+\frac{72L\sigma^2J^2}{\varepsilon^2b^{(0)}M}\\
&+\sqrt{\frac{8b^{(0)}MLG^2J^2H^2(4\delta^2+1)}{\varepsilon \sigma^2}+\frac{36L\sigma^2J^2}{\varepsilon^2b^{(0)}M}}.\nonumber
%\end{split}
\end{flalign}

In Algorithm \ref{alg:topkSGD}, communications are only needed to aggregate individual gradient-update and happen only once every $H$ iterations. Hence, the total number of necessary communication rounds is given by $K=T/H$, i.e.,
\begin{flalign}
    %\begin{split}
    K(\delta,H)=&\frac{8b^{(0)}MLG^2J^2H(4\delta^2+1)}{\varepsilon \sigma^2}+\frac{72L\sigma^2J^2}{\varepsilon^2b^{(0)}MH} \nonumber\\
&+\sqrt{\frac{8b^{(0)}MLG^2J^2(4\delta^2+1)}{\varepsilon \sigma^2}+\frac{36L\sigma^2J^2}{\varepsilon^2b^{(0)}MH^2}} \nonumber\\
=&\mathcal{O}\left(MH\delta^2\right)+\mathcal{O}\left(\frac{1}{\sqrt{M}H}\right).
    %\end{split}
    \end{flalign}
\end{proof}

The results in (\ref{ConvergenceResult}) and (\ref{GlobRound}) indicate that the gradient sparsity magnitudes of all the participants jointly take impacts on global convergence and communication complexity. Given a target model accuracy (i.e., $\varepsilon$), a higher $\delta$ results in a larger bound of communication rounds. Besides, aggressively enlarging $H$ (i.e., ``working'' more) can also impair the learning efficiency as more communications may be involved. 
%Thus, to achieve a better performance with a given target accuracy, one should balance the communication cost and the convergence rate properly.

\section{Energy-efficient Federated Learning on GPUs: Problem Formulation and Control Algorithm}
The theoretical results above reveal that both gradient sparsity levels $\{\delta_m\}_{\forall m}$ and global update frequency $H$ play critical roles in convergence rate and communication efficiency from the learning perspective. Considering a realistic edge computing environment, we highlight that $\{\delta_m\}_{\forall m}$ and $H$ also have great impacts on the energy consumption of participating edge devices, because they affect the payload required for transmitting and the workload required for processing, respectively. In this section, we aim to tune these two types of compression parameters accommodating heterogeneous FL participants for optimizing overall energy efficiency. 

\subsection{System Model and Problem Formulation}

\subsubsection{Communication model} 
Let $S_m$ denote the total number of bits communicated by the $m$-th participant per global round. Using the ``${\rm Top}_k$’’ compressor defined in (\ref{topk}), one needs to send the values and the positions of the non-zero gradients in the flattened tensors after sparsification. Let ${\rm FPP}$ denote the floating-point precision, e.g., ${\rm FPP}=32$ for single-precision floating-points and ${\rm FPP}=64$ for double-precision floating-points. With the sparsity $\delta_m=d/k_m$, participant $m$ needs ${\rm FPP}$ bits to represent the absolute value of each non-zero gradient with one extra bit indicating its sign, i.e., 
%\vspace{-0.05in}
\begin{equation}
    S_{m,val}=({\rm FPP}+1)\times k_m  \ \ \text{bits}.
\end{equation}
%The value of $S_m$ can be split into actual payload and communication overhead. The payload is the amount of bits that are needed to communicate the sparsified gradient.

The positions of the non-zero entries can be identified by enumerating all possible sparsity patterns, which require
%\vspace{-0.05in}
\begin{equation}\label{position}
    S_{m,pos}=\log_2 \binom{d}{k_m} \ \ \text{bits}
\end{equation}
to represent. Accordingly, we define $S_m$ as 
\begin{equation}\label{DataSize}
    S_m=s_1 (S_{m,val}+ S_{m,pos}) + s_0 \ \ \text{bits},
\end{equation}
where $s_0$ and $s_1$ are coefficients indicating extra communication overhead involved in wireless transmitting~\cite{khirirat2020communication}. Note that a federated training task usually lasts for a time duration in tens of minutes due to the huge volume of data required for transferring as well as the high computational complexity in running SGD. Thus, the channel conditions of participants may suffer from great fluctuations during a training period. For this reason, it is expected to consider the energy consumption of a training task from a long-term learning perspective. Here, we employ the average transmission rate of every participant $m \in \mathcal{M}$, which is evaluated by

\begin{equation}
    R_m = W_m\mathbb{E}_{h_m}[\log_2 (1+\frac{P_m |h_m|^2}{N_0})],
\end{equation}
where the expectation is taken over channel fading $h_m$ between participant $m$ and the base station; $N_0$ indicates the power of additive white Gaussian noise; $W_m$ and $P_m$ denote the bandwidth and the transmitting power of participant $m$, respectively \cite{pan2011joint}. Afterward, the energy consumed to transmit the sparsified gradients by participant $m$ is calculated as
%\vspace{-0.05in}
\begin{equation}
    E_m^{com}=\frac{P_m S_m}{R_m}.
\end{equation}

\subsubsection{Computational model}
On-device learning, especially for training deep network models, is a compute-intensive task that has proved challenging to achieve adequate performance when running merely on CPUs of commodity mobile devices. Fueled by the recent advances in mobile hardware technology, GPU has become a ubiquitous hardware accelerator integrated virtually in every smart device to offer significantly more compute power. A typical GPU chip includes a multi-core GPU module and an associated GPU memory module where the voltage and frequency of GPU cores and GPU memory can be controlled separately. We model the energy consumed to execute a single iteration of GPU-accelerated mini-batch SGD at the $m$-th edge device as the product of the runtime power and the execution time, i.e.,
%\vspace{-0.05in}
\begin{equation}\label{ite_ener}
    E_{m,ite}^{cmp}= P_m^{cmp} \cdot T_m^{cmp},
\end{equation}
where $P_m^{cmp}$ and $T_m^{cmp}$ are two functions of the core voltage and the core/memory frequency~\cite{mei2017energy}, which are given by
\begin{equation}
    P_m^{cmp}=P_m^0+\alpha f_m^{mem}+ \beta (v_m^{core})^2 f_m^{core},
\end{equation}
\begin{equation}
    T_m^{cmp}=T_m^0+\frac{a}{f_m^{mem}} + \frac{b}{f_m^{core}}.
\end{equation}

Here, $P_m^0$ and $T_m^0$ represent the static power consumption and static time consumption; $v_m^{core},f_m^{core},f_m^{mem}$ denote the GPU core voltage, GPU core frequency, and GPU memory frequency, respectively; $\alpha$, $\beta$, $a$ and $b$ are constant coefficients indicating the sensitivity to memory frequency scaling and the core voltage/frequency scaling, which depend on the hardware and the application characteristics. In this work, the value of $\alpha$, $\beta$, $a$, and $b$ are derived from platform-based experiments by measuring the average runtime energy consumption. Specifically, we measure the energy consumed to execute single-step SGD and estimate the parameters that appeared in the energy model in (\ref{ite_ener}). For simplicity, we shall assume that $E_{m,ite}^{cmp}$ keeps unchanged during training in spite of the incremental batch sizes used in Algorithm \ref{alg:topkSGD}. This is reasonable due to the fact that GPUs are capable of parallel execution. When the training batch size remains under a threshold, GPUs can process the whole-batch samples simultaneously, leading to a near-constant execution time \cite{ren2019accelerating,lin2020don}. In this case, the total energy\footnote{The computational complexity of the gradient sparsification algorithm is so low compared with running local SGD that the corresponding computational workload and the involved energy consumption can be omitted \cite{ren2019accelerating}.} consumed between every two global synchronizations including $H$ local iterations can be computed as
\begin{equation}
    E_m^{cmp}=E_{m,ite}^{cmp} \cdot H.
\end{equation}

\subsubsection{Problem Formulation}
Given the communication model and the computational model above, we compute the total energy consumption of all the participating edge devices between every two global synchronizations as
\begin{equation}
    E=\sum \limits_{m=1}^M\left( E_m^{com} + E_m^{cmp}\right),
\end{equation}
which captures the heterogeneity of participants on their communication conditions and GPU capacities. Exploiting Corollary \ref{corollary1} and plugging $\delta=\sqrt{\frac{1}{M}\sum_{m=1}^M \delta_m^2}$, we model the overall energy consumed during the whole training process as 
\begin{equation}\label{obj}
    \Gamma(\delta_1,\delta_2,...,\delta_M,H)= E\cdot\sum \limits_{m=1}^M\left(\alpha H\delta_m^2+\frac{\beta}{M^{3/2}H}\right),
\end{equation}
where $\alpha$ and $\beta$ are constants used to approximate the big-$\mathcal{O}$ notion in (\ref{GlobRound}). With the goal of overall energy consumption minimization, we jointly determine the gradient sparsity $\delta_m$ for each participant $m\in\mathcal{M}$ and the global update frequency $H$ by solving the following optimization problem: 
\begin{subequations}  \label{ProbForm}
\begin{align}
\min \limits_{\{\delta_m,H\}} \quad &  \Gamma(\delta_1,\delta_2,...,\delta_M,H) \\
s.t.\quad & \delta_{lb}\leq \delta_m \leq \delta_{ub}, \ \forall m, \label{c1}\\
& H \in \mathcal{H}. \label{c2}
\end{align}
\end{subequations}

Here, constraints (\ref{c1}) and (\ref{c2}) restrict the feasible range of $\delta_m$ and $H$ with $\mathcal{H}\subset \mathbb{Z}^+$, respectively. The formulated problem in (\ref{ProbForm}) exhibits a certain trade-off between compression and convergence in the considered communication-compressed FL setting. To minimize the energy consumption in single global iteration, one will severely compress the gradient tensor and decide to perform a single-step local update all the time, which would greatly impact the convergence rate and increase the number of global synchronizations. As a result, total energy consumption may increase considerably. In practice, the participants with excellent communication environments are expected to adopt slight gradient compression schemes for accelerating the convergence, while the others with poor channel conditions should be allowed to sparsify the gradients more severely to save the energy. In light of this, global update frequency and gradient sparsity should be carefully determined by considering the heterogeneity of participants for achieving energy-efficient FL.

\subsection{Compression Control Algorithm}
We develop a compression parameter control algorithm by approximately solving the formulated optimization problem in (\ref{ProbForm}) that falls into the category of mixed-integer non-linear programming. It is non-trivial to solve since the integer variable $H$ is highly coupled with the continuous variables $\{\delta_m\}_{\forall m}$. In the following, we first transform the permutation operator in (\ref{DataSize}) into a tractable form. Then we propose an efficient algorithm integrating generalized Benders decomposition and inner convex approximation to find a satisfactory solution of the considered compression control problem.  %Another difficulty lies in the permutation operator in the formulation of the gradient volume after sparsity.

\begin{corollary}\label{proposition1}
Let $\delta_m\!=\!d/k_m$ be the chosen gradient sparsity and $\kappa\!=\!{\rm FPP}\!+\!1$. The total number of bits required to be transmitted by participant $m$ per global round, i.e., $S_m$, can be approximated to 
\begin{equation}\label{Sval}
    S_m(\delta_m)=\frac{s_1d}{\delta_m}(\log_2 \delta_m+\kappa) + s_0
\end{equation}

\end{corollary}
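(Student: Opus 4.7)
The plan is to start from the definition of $S_m$ in equation (\ref{DataSize}) and separately handle the value-encoding term $S_{m,val}$ and the position-encoding term $S_{m,pos}$ after substituting $k_m = d/\delta_m$. The value-encoding term is immediate: $S_{m,val} = ({\rm FPP}+1)k_m = \kappa d/\delta_m$, since $\kappa = {\rm FPP}+1$ by hypothesis. All of the work lies in turning the combinatorial expression $\log_2 \binom{d}{k_m}$ into a closed-form function of $\delta_m$.

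For the position-encoding term I would apply Stirling's approximation to each factorial in $\log_2 \binom{d}{k_m} = \log_2 d! - \log_2 k_m! - \log_2(d-k_m)!$. Using $\log_2 n! = n\log_2 n - n\log_2 e + O(\log n)$, the $n\log_2 e$ contributions from the three factorials cancel (since $d = k_m + (d-k_m)$), leaving
\begin{equation}
\log_2 \binom{d}{k_m} \approx d\log_2 d - k_m \log_2 k_m - (d-k_m)\log_2(d-k_m).
\end{equation}
Since $\delta_m \geq 1$ and in the sparsification regime of interest $k_m \ll d$, I would then use $\log_2(d-k_m) = \log_2 d + \log_2(1-k_m/d) \approx \log_2 d$, which collapses the right-hand side to $k_m \log_2(d/k_m) = (d/\delta_m)\log_2 \delta_m$. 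Equivalently, this is the binary-entropy bound $\log_2\binom{d}{k} \approx d\, H_2(k/d)$ linearized for small $k/d$; either derivation gives the same leading-order expression.

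Combining the two pieces yields $S_{m,val} + S_{m,pos} \approx (d/\delta_m)(\kappa + \log_2 \delta_m)$, and substituting into the affine form $S_m = s_1(S_{m,val}+S_{m,pos}) + s_0$ produces exactly the claimed expression (\ref{Sval}).

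The main obstacle I anticipate is justifying that the dropped lower-order terms are genuinely negligible in the regime the paper operates in. The Stirling remainder is $O(\log d)$ per factorial, and the error from $\log_2(1-k_m/d) \approx 0$ is $O(k_m^2/d)$; both are dominated by the leading $\kappa k_m = \kappa d/\delta_m$ term when $\delta_m$ is bounded away from $d$ (which is guaranteed by constraint (\ref{c1}) via $\delta_m \leq \delta_{ub}$) and when $k_m$ is non-trivial relative to $\log d$. Since $S_m$ is only used inside the energy model as an approximation of the payload, and since $s_0$ already absorbs transmission overhead, I would simply note that these lower-order corrections can be folded into the constant $s_0$ without altering the structure of the optimization problem (\ref{ProbForm}).
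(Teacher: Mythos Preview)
Your proposal is correct and follows essentially the same route as the paper: apply Stirling's formula to $\log_2\binom{d}{k_m}$, cancel the $n\log_2 e$ terms, and then use the high-sparsity regime $k_m\ll d$ (equivalently $\delta_m\gg 1$) to reduce the position cost to $(d/\delta_m)\log_2\delta_m$ before combining with $S_{m,val}=\kappa d/\delta_m$. The only cosmetic difference is that the paper substitutes $k_m=d/\delta_m$ before the final simplification, arriving at $d[\log_2\delta_m+(\tfrac{1}{\delta_m}-1)\log_2(\delta_m-1)]$ and then invoking $\delta_m\gg 1$, whereas you linearize $\log_2(d-k_m)\approx\log_2 d$ first; the two orderings are algebraically equivalent.
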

\begin{proof}
\vspace{-0.05in}
\begin{equation}
\begin{aligned}
&S_{m,pos}=\log_2 \binom{d}{k_m} 
=  \log_2 \frac{d!}{(d-k_m)!k_m!}\\
=& \log_2 d!-\log_2 k_m!-\log_2(d-k_m)! \\
\overset{(a)}\approx & d\log_2 d-k_m \log_2 k-(d-k_m)\log_2 (d-k_m) \\
=& d\log_2 d-\frac{d}{\delta_m}\log_2 \frac{d}{\delta_m}-(d-\frac{d}{\delta_m})\log_2 (d-\frac{d}{\delta_m}) \\
=&d [\log_2 \delta_m +(\frac{1}{\delta_m}-1)\log_2(\delta_m-1)] 
\overset{(b)}\approx \frac{d}{\delta_m} \log_2 \delta_m. \nonumber
\end{aligned}
\end{equation}
Here, $(a)$ is by Stirling formula that gives precise estimate for factorials, i.e., $\log_2{n!}\!\approx\! n\log_2{n}\!-\!n\log_2{e}$, and $(b)$ is due to $\delta_m\!=\!d/k_m\!\gg\!1$. Substituting $S_{m,pos}$ into (\ref{DataSize}) yields (\ref{Sval}).
\end{proof}

We can easily verify that $S_m(\delta_m)$ is strongly convex w.r.t. $\delta_m$, when $\delta_m\geq e^{3/2}$ by calculating its second order derivative:
\begin{equation}
\begin{split}
    \frac{\mathrm{d}^2 S_m(\delta_m)}{\mathrm{d}\delta_m^2}=\frac{s_1d(2\ln \delta_m -3)}{\delta_m^3\ln 2}+\frac{2s_1d \kappa}{\delta_m^3 }>0.
\end{split}
\end{equation}

Extensive empirical evidence reveals that $\delta_m\!\geq\! e^{3/2}$ always holds in practice so that gradient sparsification can considerably reduce the communication overhead. In the following, we assume that $\delta_{lb}$ is set to be no less than $e^{3/2}$ and view $S_m(\delta_m)$ as a strongly convex function without extra conditions. Using Corollary \ref{proposition1}, we substitute $\Gamma(\cdot)$ by an approximated energy cost function and rewrite the problem in (\ref{ProbForm}) as:
\begin{flalign}\label{ProbForm2}
%\begin{split}
\min \limits_{\{\delta_m,H\}} \quad &  \sum \limits_{m=1}^M\left(\alpha H\delta_m^2+\frac{\beta}{M^{3/2}H}\right)\nonumber\\ 
    &\cdot \sum \limits_{m=1}^M\left(\frac{P_m s_1d (\log_2 \delta_m+\kappa) }{R_m \delta_m}+\frac{P_m s_0}{R_m} + E_m^{0}H\right) \nonumber\\
s.t.\quad    & (\ref{c1})\  \text{and}\  (\ref{c2}). 
%\end{split}
\end{flalign}

To solve (\ref{ProbForm2}), we propose an algorithm integrating generalized Benders decomposition and inner convex approximation. Specifically, generalized Benders decomposition performs as the outer-loop algorithm to decompose the problem above into two sub-problems: a primal problem w.r.t. the continuous variables $\{\delta_m\}_{\forall m}$ and a master problem w.r.t. the integer variable $H$~\cite{geoffrion1972generalized}. As the inner-loop algorithm, inner convex approximation is used to solve the primal problem by successively optimizing the approximants of the non-convex objective. We solve the primal problem and the master problem in an alternative and iterative manner, as detailed in Algorithm~\ref{algBD}. In each outer-loop iteration, solving the primary problem with given $H$ yields an upper bound for the optimal value of (\ref{ProbForm2}) while solving the master problem provides its lower bound. Particularly, we formulate the \textit{primal problem} in the $i$-th iteration with fixed $H^{(i)}$ as follows:
\begin{subequations}  \label{PriProb}
\begin{align}
\min \limits_{\{\delta_m\}} \quad &  \sum \limits_{m=1}^M\left(\alpha H^{(i)}\delta_m^2+\frac{\beta}{M^{3/2}H^{(i)}}\right) \label{PriObj}\\ 
    &\cdot \sum \limits_{m=1}^M\left(\frac{P_m s_1d (\log_2 \delta_m+\kappa) }{R_m \delta_m}+\frac{P_m s_0}{R_m} + E_m^{0}H^{(i)}\right) \nonumber\\
s.t.\quad    & (\ref{c1}). \nonumber
\end{align}
\end{subequations}

Note that the primal problem above is always feasible for all $H^{(i)}$ since the continuous variables $\{\delta_m\}_{\forall m}$ are independent of $H^{(i)}$ in the constraint. Thus, we do not need to check the feasibility of the current $H^{(i)}$ as conventional Benders decomposition methods do~\cite{li2020energy}. Let $\{\lambda_{m1}\}_{\forall m}$ and $\{\lambda_{m2}\}_{\forall m}$ denote two sets of Lagrange multiplier corresponding to the constraints in (\ref{c1}). We solve the primal problem to get the solutions of $\{\delta_m\}_{\forall m}$, $\{\lambda_{m1}\}_{\forall m}$ and $\{\lambda_{m2}\}_{\forall m}$, which are denoted by $\{\delta_m^{(i)}\}_{\forall m}$, $\{\lambda_{m1}^{(i)}\}_{\forall m}$ and $\{\lambda_{m2}^{(i)}\}_{\forall m}$, respectively. We also update $UBD$ with the objective value of (\ref{PriObj}). Afterwards, a feasibility cut can be generated and added to the master problem as a new constraint. In particular, the \textit{master problem} in the $i$-th iteration is given as follows:
\begin{subequations}  \label{MasProb}
\begin{align}
\min \limits_{H} \quad &  \eta \\
s.t.\quad & \eta \geq \sum \limits_{m=1}^M\left(\alpha H(\delta_m^{(l)})^2+\frac{\beta}{M^{3/2}H}\right)\label{FeaCut}\\ 
    &\cdot \sum \limits_{m=1}^M\left(\frac{P_m s_1d (\log_2 \delta_m^{(l)}+\kappa) }{R_m \delta_m^{(l)}}+\frac{P_m s_0}{R_m} + E_m^{0}H\right) \nonumber\\
    &+\!\sum \limits_{m=1}^M \!\left(\lambda_{m1}^{(l)}\! (\delta_{lb}\!-\!\delta_m^{(l)})+\!\lambda_{m2}^{(l)} (\delta_m^{(l)}\!-\!\delta_{ub})\right),
    \forall l\!=1,...,i, \nonumber\\
& H \in \mathcal{H}.  
\end{align}
\end{subequations}

\begin{algorithm}[!t]
\caption{Compression Control Algorithm} \label{algBD}
%\hspace*{0.02in} {\bf Input:} Historical service traces of the edge sites. \\
%\hspace*{0.02in} {\bf Output:} $\boldsymbol{x}$ and $\boldsymbol{y}$ \\
\hspace*{0.02in} {\bf Initialization:} $H^{(1)}\in \mathcal{H}$; $I_{out}$; $\epsilon=10^{-5}$; $\iota=10^{-5}$; $\xi=10^{-5}$; $LBD=-\infty$; $UBD=\infty$; $i=1$; 
\begin{algorithmic}[1]
\Repeat
    %\State Solve the primal problem in (\ref{PriProb}) via .
    \State Set inner-loop iteration index $\nu=0$
    \State Set step size $\gamma^0\in(0,1]$ and start with $\delta_m^0=\delta_{lb}, \forall m$
    \Repeat
    \State Compute $\delta_m^*(\delta_m^\nu), \forall m$ via (\ref{ClosedForm}) 
    \State Set $\delta_m^{\nu+1}=\delta_m^\nu+\gamma^0(\delta_m^*(\delta_m^\nu)-\delta_m^\nu), \forall m$
    \State Set $\nu=\nu+1$
    \State Set $\gamma^\nu=\gamma^{\nu-1}(1-\xi \gamma^{\nu-1})$
    \Until{$||\boldsymbol{\delta}^\nu-\boldsymbol{\delta}^{\nu-1}||_2^2\leq \iota$}
    \State Save $\delta_m^{(i)}=\delta_m^\nu$ for all $m\in\mathcal{M}$ as the current solution of the primal problem in (\ref{PriProb}).
    \State Obtain the Lagrangian multiplier $\lambda$ and the objective value $\Gamma^{(i)}$ 
    \State Update the upper bound value with $UBD=\Gamma^{(i)}$
    \State Solve the master problem in (\ref{MasProb})
    \State Obtain intermediate solutions of $H^{(i)}$ and $\eta^{(i)}$
    \State Update the upper bound value with $LBD=\eta^{(i)}$
    \State Save $H^{(i+1)}=H^{(i)}$
    \If{$UBD-LBD \leq \epsilon$}
        \State \Return The current solutions of $\{\delta_m\}_{\forall m}$ and $H$.
    \Else{\ Set $i=i+1$}
    \EndIf
\Until{$i=I_{out}$}
\State \Return The current solutions of $\{\delta_m\}_{\forall m}$ and $H$.
\end{algorithmic}
\end{algorithm}

%Here, the iteratively-generated constraints in (\ref{FeaCut}) indicates the set of hyperplane spanned by the feasibility cuts generated from the first to the current iteration, which step through reducing the searching region for the globally optimal solution of $H$. 
The master problem in (\ref{MasProb}) is a small-scale mixed-integer programming problem and can be solved using classical optimization algorithms, e.g., Branch-and-Bound. After solving it, we update $LBD$ with the value of $\eta$. Now we focus on solving the non-convex primal problem (\ref{PriProb}), which is specified by the inner-loop procedures in Algorithm \ref{algBD}. Here, we use the inner convex approximation method to find the stationary points iteratively. The main idea is to successively optimize certain approximations of the non-convex objective function in (\ref{PriObj}) while maintaining feasibility at each iteration. This requires us to derive a strongly convex approximant of (\ref{PriObj}) around each feasible iteration. With a slight abuse of notation, we denote the objective function in (\ref{PriObj}) by $\Gamma(\boldsymbol{\delta})$ with $\boldsymbol{\delta}\!=\!(\delta_1,...,\delta_M)$ and rewrite it as the product of two functions, i.e., 
\begin{equation}\label{ProConv}
\Gamma(\boldsymbol{\delta})= \Gamma_1(\boldsymbol{\delta}) \cdot \Gamma_2(\boldsymbol{\delta}),
\end{equation}
where  
\begin{flalign} 
    &\Gamma_1(\boldsymbol{\delta})\!=\!\!\sum \limits_{m=1}^M\!\left(\alpha H^{(i)}\delta_m^2+\frac{\beta}{M^{3/2}H^{(i)}}\!\right), \\
    &\Gamma_2(\boldsymbol{\delta})\!=\!\!\sum \limits_{m=1}^M\!\left(\!\frac{P_m s_1d (\log_2\!\delta_m\!\!+\!\kappa)}{R_m \delta_m}\!+\!\frac{P_m s_0}{R_m}\!+\! E_m^{0}H^{(i)}\!\right).
\end{flalign} 

Note that both $\Gamma_1(\boldsymbol{\delta})$ and $ \Gamma_2(\boldsymbol{\delta})$ are positive and strongly convex since non–negative combinations of convex functions preserve convexity. Capturing such the ``product of convexity'' property of (\ref{ProConv}), we built an approximation for $\Gamma(\boldsymbol{\delta})$ as:%Inspired by  capture such the ``product of convexity'' property and 
\begin{equation} \label{ApproObj} 
\tilde{\Gamma}(\boldsymbol{\delta};\boldsymbol{\delta}^\nu)= \Gamma_1(\boldsymbol{\delta}) \cdot \Gamma_2(\boldsymbol{\delta}^\nu)+\Gamma_1(\boldsymbol{\delta}^\nu)\cdot \Gamma_2(\boldsymbol{\delta}),%+||\boldsymbol{\delta}-\boldsymbol{\delta}^\nu||_2^2,
\end{equation}
where $\boldsymbol{\delta}^\nu \triangleq (\delta_1^\nu, \delta_2^\nu,...,\delta_M^\nu)$ denote the current intermediate $\boldsymbol{\delta}$ obtained in the $\nu$-th inner iteration~\cite{scutari2016parallel}. Obviously, the approximated objective function in (\ref{ApproObj}) is strongly convex and the corresponding approximations of the primal problem in (\ref{PriProb}) can be solved optimally, which is in the form of:
\begin{equation}  \label{PriProb_Appro}
\begin{split}
\boldsymbol{\delta}^*(\boldsymbol{\delta}^\nu)=\mathop{\text{argmin}} \limits_{\{\delta_m\}_{\forall m}} &\quad   \tilde{\Gamma}(\boldsymbol{\delta};\boldsymbol{\delta}^\nu)\\
s.t.&\quad  (\ref{c1}).
\end{split}
\end{equation}

%%%%%%%%%%%%%%%%%%%%%%%%%%%%%%%%%%%%%
\begin{figure*} \centering 
\subfigure[Impact of system scale\label{sim1}]
  {\includegraphics[width=4.45cm]{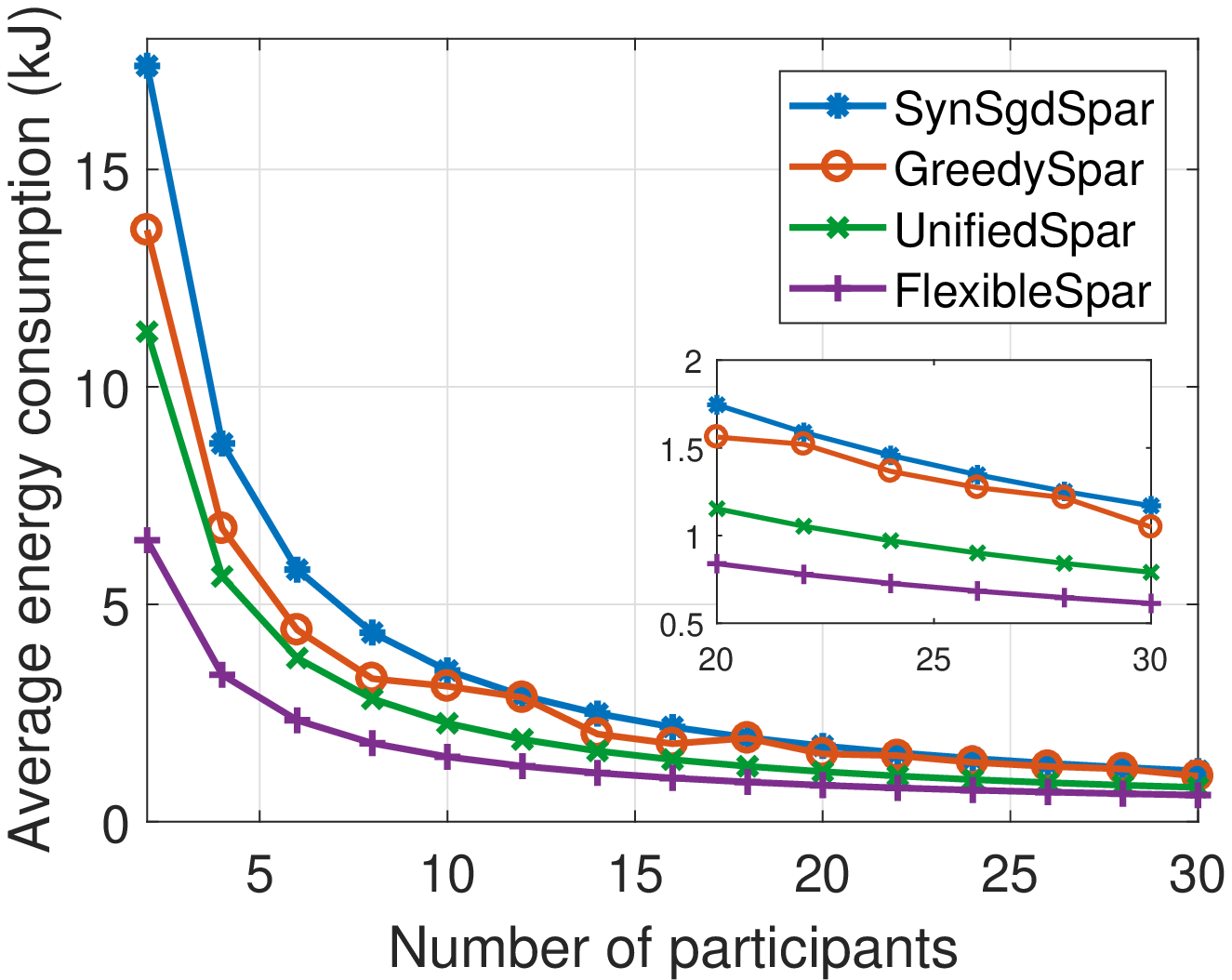}}
\subfigure[Impact of user heterogeneity\label{sim2}]
  {\includegraphics[width=4.45cm]{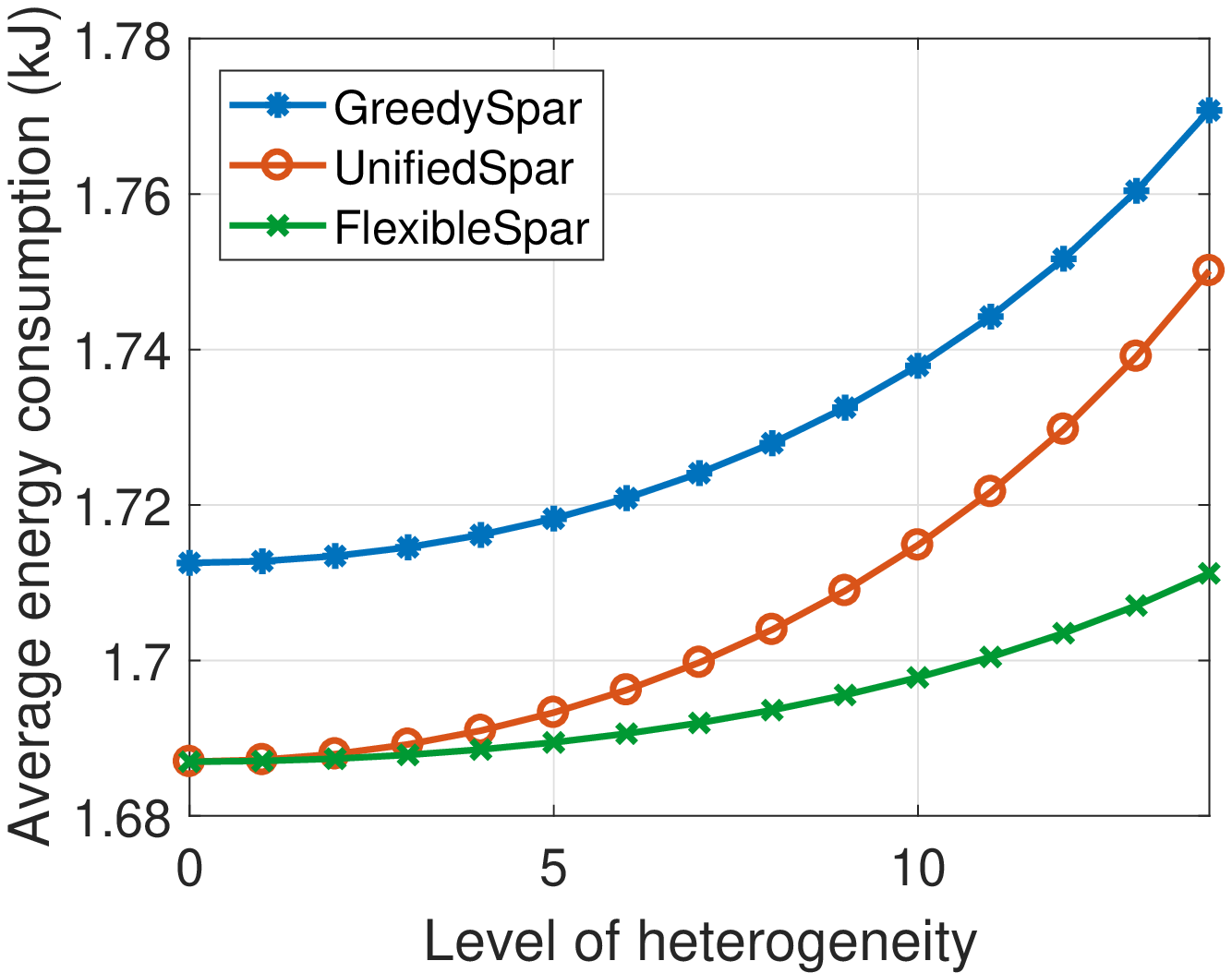}}
\subfigure[Impact of comm. capacity\label{sim3}]
  {\includegraphics[width=4.45cm]{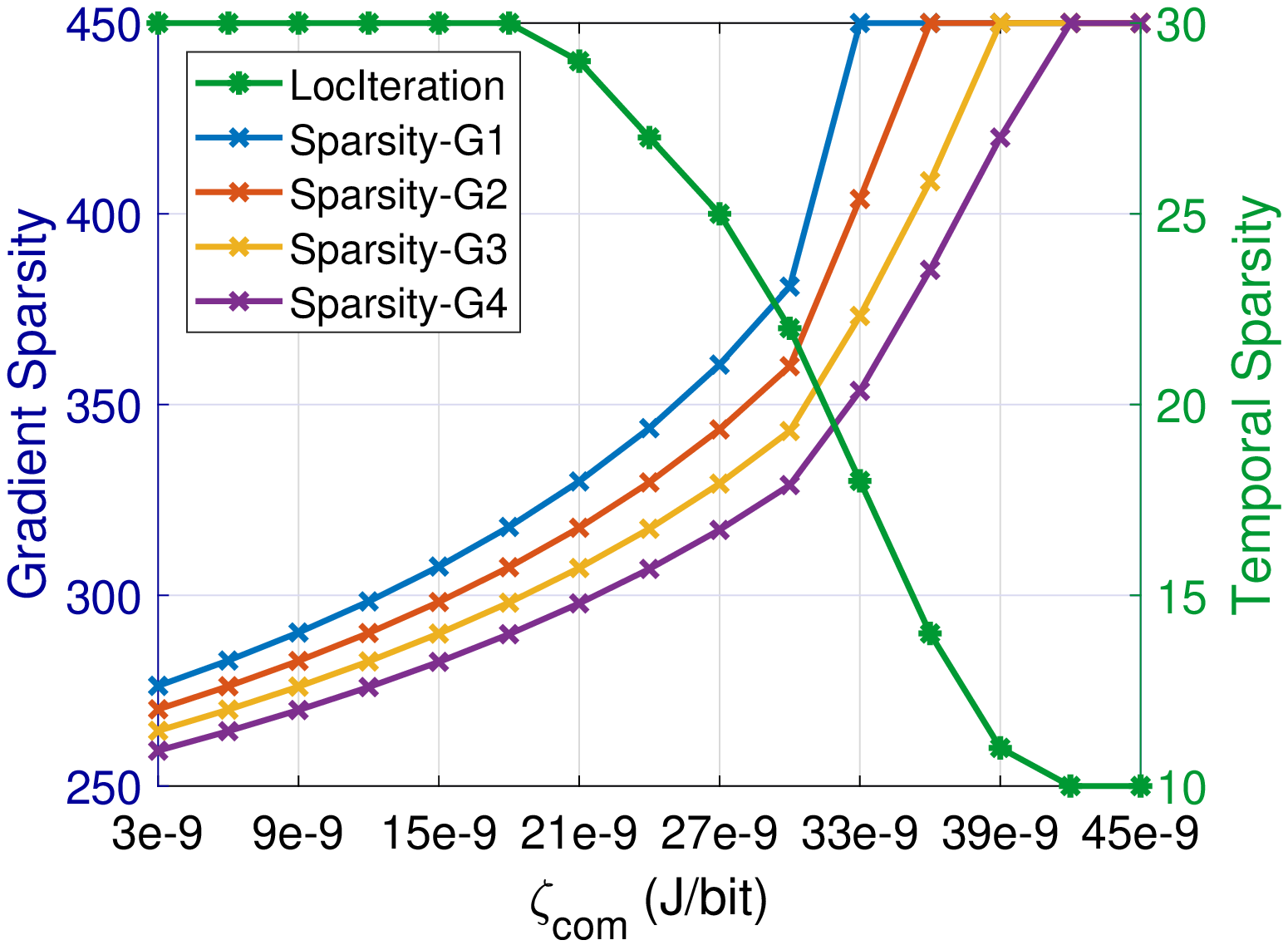}}
\subfigure[Impact of comp. capacity\label{sim4}]
  {\includegraphics[width=4.45cm]{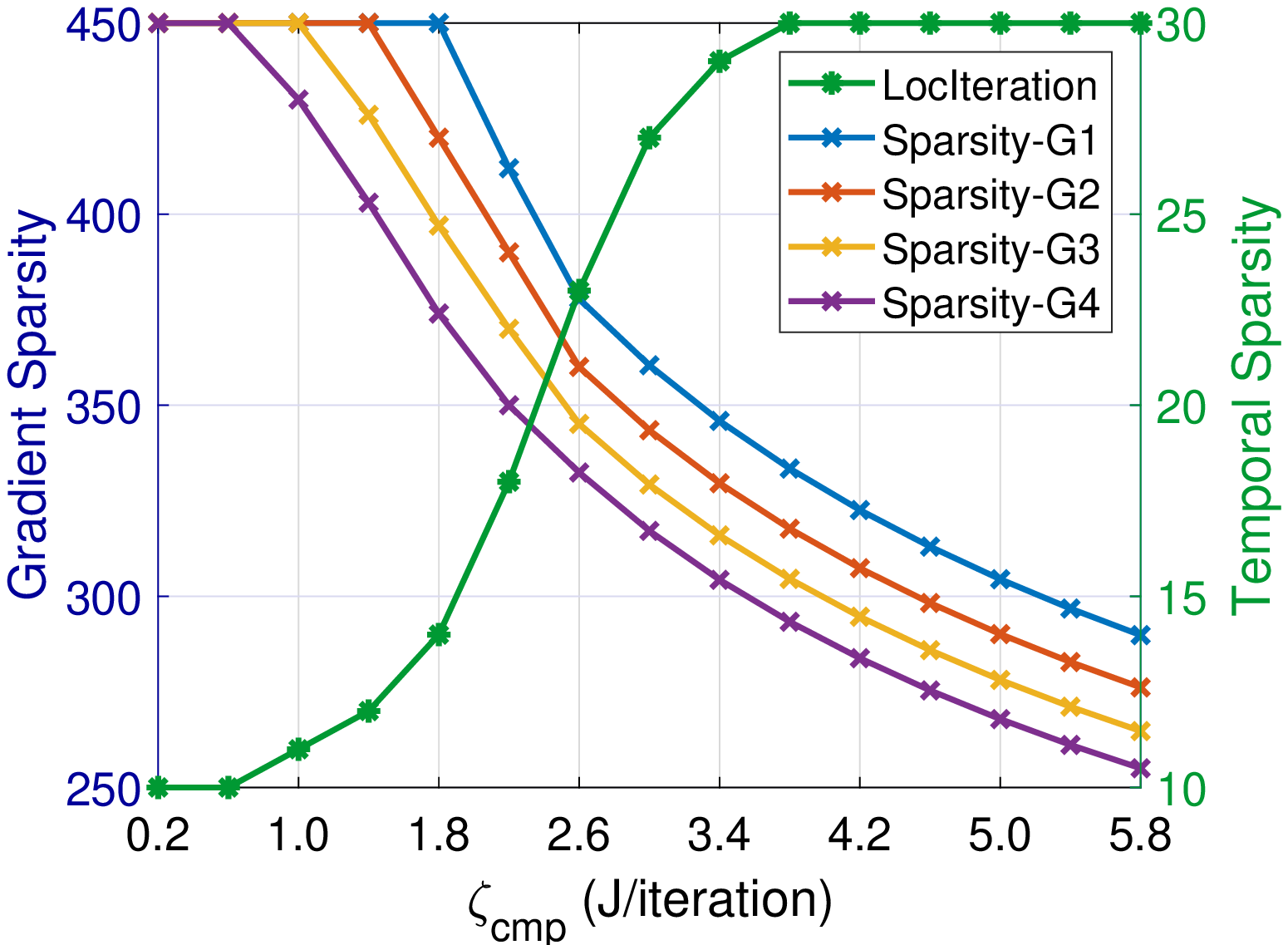}}
  \caption{Impact of system parameters.\label{Fig:SimPerformance}}
\end{figure*}

\begin{figure*} \centering 
\subfigure[Training accuracy vs epochs\label{exp1}]
  {\includegraphics[width=4.45cm]{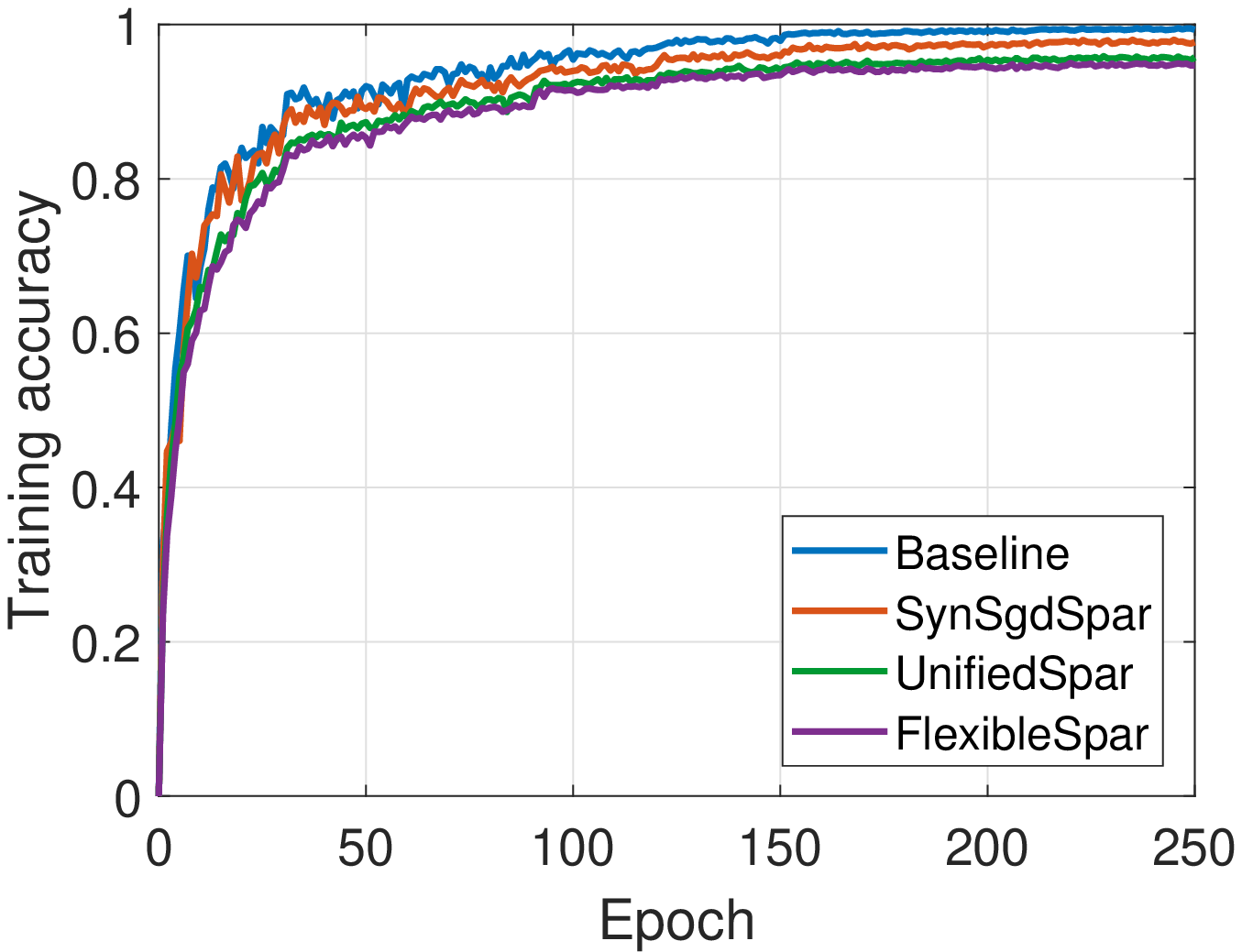}}
\subfigure[Test error vs consumed energy\label{exp2}]
  {\includegraphics[width=4.45cm]{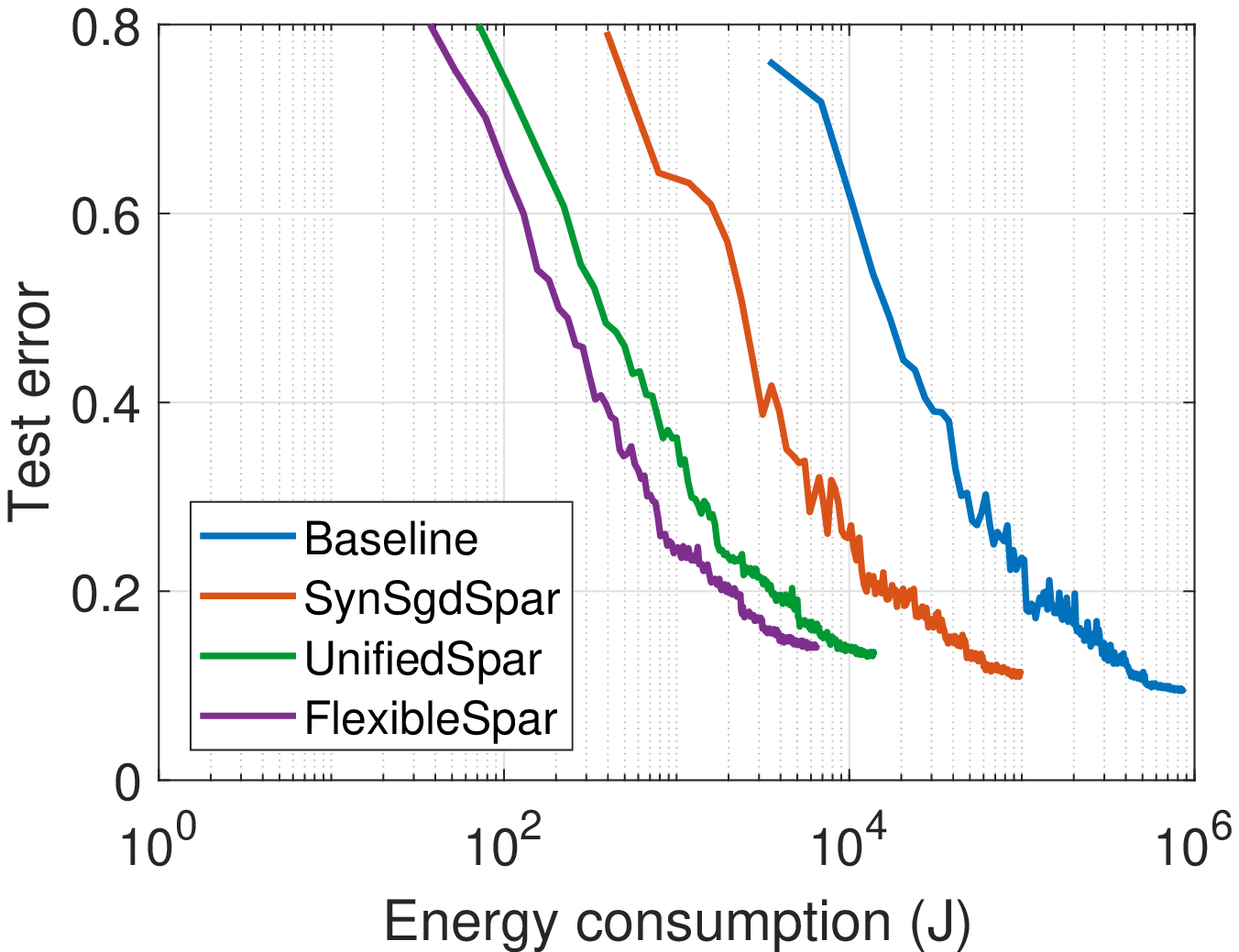}}
\subfigure[Training accuracy vs epochs\label{exp3}]
  {\includegraphics[width=4.45cm]{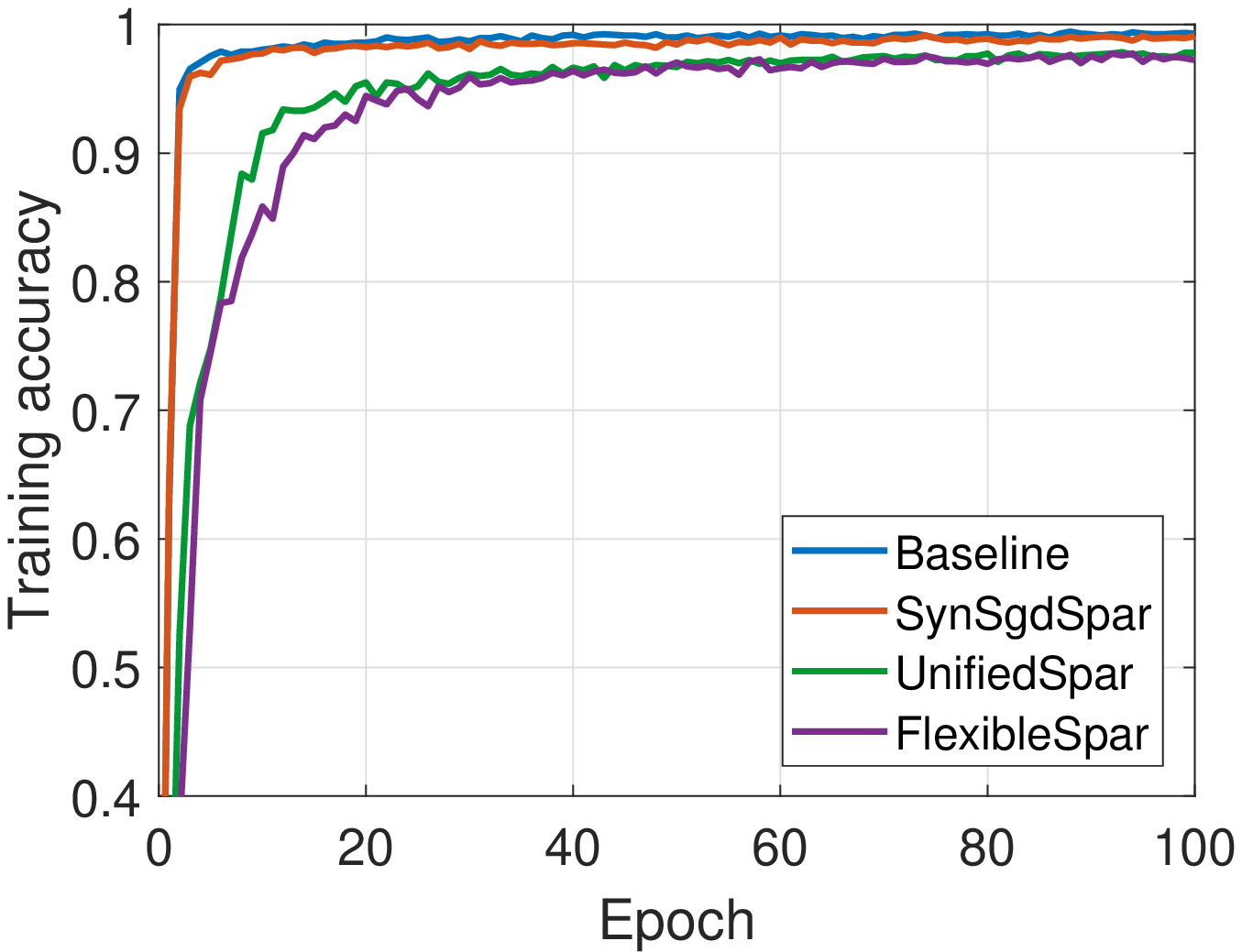}}
\subfigure[Test error vs consumed energy\label{exp4}]
  {\includegraphics[width=4.45cm]{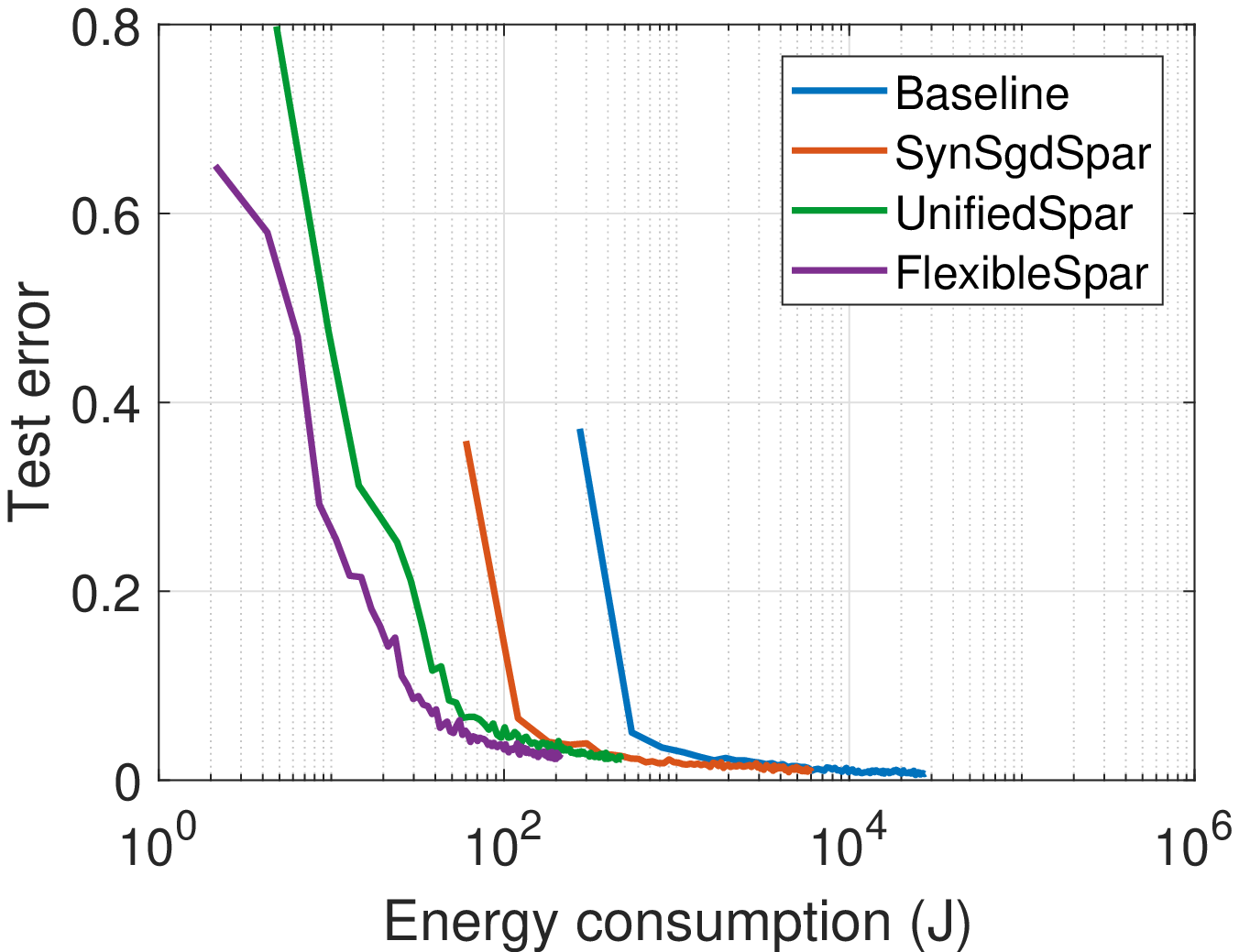}}
  \caption{Performance on various architectures and datasets. ((a-b): ResNet20 trained on CIFAR-10; (c-d): LeNet5-Caffe trained on MNIST.)\label{Fig:ExpPerformance}}
\end{figure*}
%%%%%%%%%%%%%%%%%%%%%%%%%%%%%%%%%%%%%%

\begin{corollary}[Closed form of $\delta_m^*(\boldsymbol{\delta}^\nu),\forall m$]\label{LemmaCloseForm}
Let $\delta_m^*(\boldsymbol{\delta}^\nu)$ be the optimal solution of $\delta_m$ given the current $\boldsymbol{\delta}^\nu$. Then, each $\delta_m^*(\boldsymbol{\delta}^\nu)$ has the following expression:
\begin{equation} \label{ClosedForm}
\delta_m^*(\boldsymbol{\delta}^\nu)=\left\{
\begin{aligned}
\delta_{lb}, & & {\overline{\delta}_m \leq \delta_{lb}}\\
\overline{\delta}_m(\boldsymbol{\delta}^\nu),& &  {\delta_{lb}} < \overline{\delta}_m  \leq \delta_{ub}\\
\delta_{ub}, & & {\overline{\delta}_m \geq \delta_{ub}}
\end{aligned} \right.
\end{equation}
where $\overline{\delta}_m(\boldsymbol{\delta}^\nu)$ is given by
\begin{equation}  
\overline{\delta}_m(\boldsymbol{\delta}^\nu)\!=\!exp\left(-\frac{1}{3}W\left(-\frac{6AE\cdot exp\left(3B\right)}{DC_m}\right)+B\right)
\end{equation}
with
\begin{subequations}  
\begin{align}
& A=\alpha H^{(i)},\  B=1-\kappa\ln{2}, \  C_m=P_m s_1d/R_m, \nonumber\\
&D\!=\!\sum_m \left(\alpha H^{(i)} (\delta_m^\nu)^2+\frac{\beta}{M^{3/2}H^{(i)}}\right),\nonumber\\
& E\!=\!\sum_m \left(\frac{P_m s_1d (\log_2 \delta_m^\nu+\kappa) }{R_m \delta_m^\nu}+\frac{P_m s_0}{R_m} + E_m^{0}H^{(i)}\right)\ln 2. \nonumber 
\end{align}
\end{subequations}

Here, $W(\cdot)$ denotes the Lambert function. 
\end{corollary}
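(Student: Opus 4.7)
The plan is to exploit the separability of the surrogate $\tilde{\Gamma}(\boldsymbol{\delta};\boldsymbol{\delta}^\nu)$ across the $M$ coordinates and solve each per-component strongly convex problem in closed form via the Lambert $W$ function, then project onto the feasible interval. Since $\Gamma_1(\boldsymbol{\delta}^\nu)$ and $\Gamma_2(\boldsymbol{\delta}^\nu)$ are constants (they depend only on the fixed iterate $\boldsymbol{\delta}^\nu$), and both $\Gamma_1(\cdot)$ and $\Gamma_2(\cdot)$ are sums of single-variable terms $g_m(\delta_m)$, the surrogate decouples as $\sum_m F_m(\delta_m)$ with each $F_m$ strongly convex on $[\delta_{lb},\delta_{ub}]$ (under the standing assumption $\delta_{lb}\!\geq\! e^{3/2}$ recorded just above this corollary). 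Consequently, (\ref{PriProb_Appro}) collapses into $M$ independent one-dimensional box-constrained minimizations, and it suffices to determine the unconstrained minimizer $\bar{\delta}_m$ of each $F_m$.

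For each $m$, I would compute $\partial \tilde{\Gamma}/\partial \delta_m$. Using $\partial \Gamma_1/\partial \delta_m = 2\alpha H^{(i)} \delta_m$ and $\partial \Gamma_2/\partial \delta_m = (C_m/\delta_m^2)\bigl(1/\ln 2 - \log_2 \delta_m - \kappa\bigr)$ with $C_m\!=\!P_m s_1 d/R_m$, setting the derivative to zero, adopting the shorthand $A\!=\!\alpha H^{(i)}$, $D\!=\!\Gamma_1(\boldsymbol{\delta}^\nu)$, $E\!=\!(\ln 2)\,\Gamma_2(\boldsymbol{\delta}^\nu)$, and multiplying through by $\delta_m^2 \ln 2$ (using $(\ln 2)\log_2\delta_m=\ln\delta_m$), the stationarity condition reduces to the transcendental equation
\begin{equation*}
2AE\,\delta_m^3 \;=\; DC_m\,(\ln \delta_m - B),\qquad B = 1 - \kappa \ln 2.
\end{equation*}
To invert it, set $y=\ln \delta_m$ and $u=-3(y-B)$, so that $e^{3y}=e^{3B}e^{-u}$; substitution and simplification yield $u\,e^{u}=-6AE\exp(3B)/(DC_m)$. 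By the defining identity of the Lambert function, $u=W\bigl(-6AE\exp(3B)/(DC_m)\bigr)$, and back-substituting $\delta_m=\exp(y)=\exp(-u/3+B)$ produces exactly the $\bar{\delta}_m(\boldsymbol{\delta}^\nu)$ formula stated in the corollary. Strong convexity of $F_m$ on $(0,\infty)$ then forces the constrained minimizer on $[\delta_{lb},\delta_{ub}]$ to be the Euclidean projection of $\bar{\delta}_m$ onto the interval, giving the three-case expression in (\ref{ClosedForm}).

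The main obstacle is the Lambert $W$ manipulation: identifying the substitution that brings the mixed cubic-logarithmic equation into the canonical form $u\,e^{u}=z$. Once one notices that the cubic term $e^{3y}$ forces a $-3(y-B)$ rescaling (the factor $-3$ absorbs both the cube and the necessary sign change so the argument of $W$ aligns with $u\,e^{u}$), the remainder of the derivation is mechanical rearrangement. The other ingredients — separability across $m$, KKT-based stationarity, and projection onto a box — are immediate consequences of the strong convexity and product-of-convex-functions structure already established in the paragraphs preceding the statement.
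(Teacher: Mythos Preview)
Your proposal is correct and follows exactly the approach the paper indicates: the paper states only that the result ``is derived by using the Karush-Kuhn-Tucker conditions'' and omits the details, and your argument supplies precisely those details --- separability of $\tilde{\Gamma}$ across $m$, the first-order stationarity condition yielding $2AE\,\delta_m^3 = DC_m(\ln\delta_m - B)$, the Lambert-$W$ substitution, and projection onto $[\delta_{lb},\delta_{ub}]$ via strong convexity.
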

\begin{proof}
This is derived by using the Karush-Kuhn-Tucker conditions. We omit the proof due to space limitation.
\end{proof}

Thanks to the closed-formula for $\delta_m^*(\boldsymbol{\delta}^\nu)$, Algorithm \ref{algBD} can solve the inner-loop primal problem into optimality without resorting to any iterative solver that can provide approximate solutions only. We adopt the branch-and-bound algorithm to solve the master problem in (\ref{MasProb}) with computational complexity $O(2^{|\mathcal{H}|})$. When solving the primal problem takes $I_{in}$ inner-loop iterations in total, the overall complexity of Algorithm \ref{algBD} is $O(I_{out}\max\{O(2^{|\mathcal{H}|}), I_{in}\})$ in the worst case where $I_{out}$ denotes the number of iterations required by the outer loops.

\section{Performance Evaluation}
We evaluate the performance of the proposed compression control scheme, denoted by ``\textit{FlexibleSpars}'', via extensive simulations. Particularly, we compare with the following three schemes: 1) \textit{SynSgdSpars} allows participants to sparsify their gradients flexibly, and follows the typical setting of synchronous distributed gradient descent to perform global aggregation after every local update. 2) \textit{GreedySpars} greedily makes the compression control decisions by minimizing the energy cost in the current round, which is oblivious to the impact of the trade-off relationship between the number of global rounds and the cost in a single round on the total energy consumption. 3) \textit{UnifiedSpar} forces every participant to compress the gradients with a unified sparsity, regardless of the heterogeneous communication condition. Compression parameters are determined by solving a simplified version of the problem in (\ref{ProbForm2}).

Fig.~\ref{sim1} demonstrates the average energy consumption with the varying number of participating devices. We see that the average energy consumption decreases with the growing scale of the FL system under all the schemes, while all the curves tend to be flat. This is due to the fact that increasing the number of participants can help to speed up the convergence of the training process and thereby save the resources at each edge device. Yet such speed-ups will be slight when the participating devices are enough to well capture the whole dataset's information and characteristics $\bigcup_m \mathcal{D}_m$ for training. Among the four schemes, ``\textit{SynSgdSpars}'' is shown to consume the most energy since it increases the communication complexity significantly. As expected, our proposed scheme ``\textit{FlexibleSpar}'' outperforms the others as it injects more foresight than ``\textit{GreedySpar}'' and more flexibility than \textit{UnifiedSpar} into the compression decision-making, to better fit the heterogeneous communication conditions across participants. Fig.~\ref{sim2} further gives an insight into the impact of the heterogeneity level of participants' communication capacity on the system energy efficiency. Here, we set the number of participants 12 and divide them into four groups, corresponding to four capacity levels. Assume that participants belonging to the same group use the same wireless bandwidth for gradient exchanges. Let $L$ be the level of heterogeneity that controls the variations in bandwidth among different groups. Fixing the average bandwidth $\overline{W}=1$GHz, we set the bandwidth adopted by the four groups as $\overline{W}-0.03 L$ (GHz), $\overline{W}-0.01 L$ (GHz), $\overline{W}+0.01 L$ (GHz), and $\overline{W}+0.03 L$ (GHz), respectively. We set the value of $L$ to vary in $\{0,1,...,14\}$, where larger $L$ indicates higher level of heterogeneity. Fig.~\ref{sim2} elucidates that the high level of communication heterogeneity has a negative impact on FL and indeed impairs the system energy efficiency. Notice that we omit the examination of ``\textit{SynSgdSpars}'' in this setting since it performs so poorly that it is incomparable to the other schemes. Thanks to the flexible compression, our proposed scheme, as we would expect, exhibits more resilience than the others to cope with the scenario with high heterogeneity across participants in terms of wireless channel conditions.

We define $\zeta_{com}=\frac{1}{M}\sum _m \frac{P_ms_1}{R_m}$ (J/bit) and $\zeta_{cmp}=\frac{1}{M}\sum _m E_{m,ite}^{cmp}$ (J/iteration) representing the average energy intensity in terms of transmitting and computing, respectively. Keeping the above group setting of 12 participants with $L=10$, we further examine the impacts of $\zeta_{com}$ and $\zeta_{cmp}$ on the optimal values of gradient sparsity $\{\delta\}_{\forall m}$ and synchronization frequency $H$ obtained from our control algorithm. The results are shown in Fig.~\ref{sim3}-\ref{sim4}. Note that $H$ can be viewed as the level of temporal sparsity in the sense that performing multiple local iterations between every two synchronizations implicitly sparsifies the communications in the temporal domain. In Fig.~\ref{sim3}, ``\textit{Sparsity-G1/2/3/4}'' denotes the decision of gradient sparsity for the participants in the group 1/2/3/4, respectively. ``\textit{LocIteration}'' denotes the decision of their temporal sparsity $H$. When $\zeta_{com}$ is small, participants are allowed to sparsify the gradients at a relatively low degree without adding to the total energy cost significantly. In this case, the number of global rounds is dominantly affected by the second term in (\ref{GlobRound}), and a large temporal sparsity $H$ is needed to reduce the number of necessary global rounds. As $\zeta_{com}$ grows, all the participants tend to increase the gradient sparsity to fit the worsening communication conditions. We observe that the participants in Group 1 suffering the worst channel conditions generally prefer higher gradient sparsity than the participants in the other groups. When $\zeta_{com}$ is large enough, the impact of communication on the total energy consumption becomes more profound than that of computing, forcing the participants to compress the gradients severely with the large $\{\delta_m\}_{\forall m}$ to alleviate the communication burden. Accordingly, the first term in (\ref{GlobRound}) begins to take effects, resulting in the decreasing degree of temporal sparsity $H$. The analysis above can also be verified by Fig.~\ref{sim4}, where we vary $\zeta_{cmp}$ while keeping $\zeta_{com}$ unchanged. We find that the curves in Fig.~\ref{sim4} are somewhat symmetrical to the curves in Fig.~\ref{sim3}. The reason lies in that increasing $\zeta_{cmp}$ can be viewed as decreasing $\zeta_{com}$ in our system, both of which imply the process of computing cost becoming the bottleneck. Fig.~\ref{sim3}-\ref{sim4} reveal that there is a trade-off between gradient sparsity and temporal sparsity, which indeed corresponds to less ``talking'' and less ``working''. As expected, our flexible compression scheme allows participants to balance these two types of sparsity smoothly against one another for saving energy during training.
%As the gradient sparsity can not increase anymore, continuing to increase $\zeta_{com}$ could enforce the optimal value of $H$ to gradually increase, so that the the second term in (\ref{GlobRound}) can reduce and the total energy consumption can be minimized.

Fig.~\ref{Fig:ExpPerformance} further presents the training results on several commonly used deep models and datasets. Specifically, Fig.~\ref{exp1}-\ref{exp2} show the convergence rate in terms of epochs and consumed energy respectively for ResNet20 \cite{he2016deep} trained on CIFAR-10, while Fig.~\ref{exp3}-\ref{exp4} show the same things for LeNet5-Caffe \cite{sattler2019sparse} trained on MNIST. Here, we take ordinary distributed SGD as the baseline in which each participant transmits full gradients to the server after every local update. From Fig.~\ref{exp1} and Fig.~\ref{exp3}, we observe that ``\textit{FlexibleSpar}'' exhibits very similar behavior with ``\textit{UnifiedSpar}'' in terms of convergence rate and final accuracy, both of which slightly underperform ``\textit{SynSgdSpar}''. This also implies that temporal sparsity has a more profound impact on convergence rate than gradient sparsity in our setting. Due to delayed synchronization and imprecise gradient information, both ``\textit{UnifiedSpar}'' and ``\textit{FlexibleSpar}'' are shown to slow down the convergence at initial, which is consistent with our convergence analysis. In spite of this, ``\textit{FlexibleSpar}'' is validated to be capable of saving energy for on-device training. As reported in Fig.~\ref{exp2} and Fig.~\ref{exp4}, ``\textit{FlexibleSpar}'' consumes $\times 1.5\!-\!\times 100$ less energy than the other schemes to reach a given target accuracy.

\section{Conclusion}
In this work, we have presented a holistic communication compression solution to reduce the energy consumption of FL over heterogeneous participating edge devices without sacrificing the model accuracy. We have developed a FL algorithm enabling flexible communication compression and provided the convergence analysis from a theoretical perspective. Considering the heterogeneous computing and communication conditions across edge devices, we have further designed an energy-efficiency oriented compression control scheme guided by the derived convergence bound. Extensive simulations have been conducted to verify the theoretical analysis and evaluate the algorithm's performance. The results have shown that our flexibly compressed FL scheme exhibits great potentials in accommodating heterogeneous mobile edge devices and improving the energy efficiency of FL over those edge devices.

%the proposed compression control scheme is effective in improving the energy efficiency of federated edge learning systems. 

%\section*{Additional information}
%Appendix proof is available for this paper at https:// github.com/nonymity/STDRL/blob/master/STDRLproof.pdf

\section*{Acknowledgment}
The work of L. Li, R. Hou and H. Li was partially supported by National Natural Science Foundation of China (Grant No. 61571351), State Key Laboratory of Computer Architecture (ICT, CAS) under Grant No. CARCH201904, the Major Research plan of the Shaanxi Science Foundation of China (2019ZDLGY12-08), the 111 project (grant No. B16037), and OPPO funding. The work of D. Shi and M. Pan was supported in part by the U.S. National Science Foundation under grants US CNS-1646607, CNS-1801925, and CNS-2029569. The work of Z. Han was partially supported by NSF EARS-1839818, CNS-1717454, CNS-1731424, and CNS-1702850.

% Generated by IEEEtran.bst, version: 1.14 (2015/08/26)

%\bibliographystyle{IEEETran}
%\bibliography{Info21.bib}

\begin{thebibliography}{10}
\providecommand{\url}[1]{#1}
\csname url@samestyle\endcsname
\providecommand{\newblock}{\relax}
\providecommand{\bibinfo}[2]{#2}
\providecommand{\BIBentrySTDinterwordspacing}{\spaceskip=0pt\relax}
\providecommand{\BIBentryALTinterwordstretchfactor}{4}
\providecommand{\BIBentryALTinterwordspacing}{\spaceskip=\fontdimen2\font plus
\BIBentryALTinterwordstretchfactor\fontdimen3\font minus
  \fontdimen4\font\relax}
\providecommand{\BIBforeignlanguage}[2]{{%
\expandafter\ifx\csname l@#1\endcsname\relax
\typeout{** WARNING: IEEEtran.bst: No hyphenation pattern has been}%
\typeout{** loaded for the language `#1'. Using the pattern for}%
\typeout{** the default language instead.}%
\else
\language=\csname l@#1\endcsname
\fi
#2}}
\providecommand{\BIBdecl}{\relax}
\BIBdecl

\bibitem{hard2018federated}
A.~Hard, K.~Rao, R.~Mathews, S.~Ramaswamy, F.~Beaufays, S.~Augenstein,
  H.~Eichner, C.~Kiddon, and D.~Ramage, ``Federated learning for mobile
  keyboard prediction,'' \emph{arXiv preprint arXiv:1811.03604}, 2018.

\bibitem{siri}
S.~Team, ``Hey siri: An on-device dnn-powered voice trigger for apple’s
  personal assistant,''
  \url{https://machinelearning.apple.com/research/hey-siri}, accessed July,
  2020.

\bibitem{chen2020federated}
D.~Chen, L.~J. Xie, B.~Kim, L.~Wang, C.~S. Hong, L.-C. Wang, and Z.~Han,
  ``Federated learning based mobile edge computing for augmented reality
  applications,'' in \emph{Proc. of International Conference on Computing,
  Networking and Communications (ICNC)}, Big Island, HA, February 2020.

\bibitem{wang2019adaptive}
S.~Wang, T.~Tuor, T.~Salonidis, K.~K. Leung, C.~Makaya, T.~He, and K.~Chan,
  ``Adaptive federated learning in resource constrained edge computing
  systems,'' \emph{IEEE Journal on Selected Areas in Communications}, vol.~37,
  no.~6, pp. 1205--1221, 2019.

\bibitem{yang2020federated}
K.~Yang, T.~Jiang, Y.~Shi, and Z.~Ding, ``Federated learning via over-the-air
  computation,'' \emph{IEEE Transactions on Wireless Communications}, vol.~19,
  no.~3, pp. 2022--2035, March 2020.

\bibitem{chen2020convergence}
M.~Chen, H.~V. Poor, W.~Saad, and S.~Cui, ``Convergence time optimization for
  federated learning over wireless networks,'' \emph{arXiv preprint
  arXiv:2001.07845}, 2020.

\bibitem{tran2019federated}
N.~H. Tran, W.~Bao, A.~Zomaya, N.~M. NH, and C.~S. Hong, ``Federated learning
  over wireless networks: Optimization model design and analysis,'' in
  \emph{Proc. of IEEE Conference on Computer Communications (INFOCOM)}, Paris,
  France, April 2019.

\bibitem{dinh2019federated}
C.~Dinh, N.~H. Tran, M.~N. Nguyen, C.~S. Hong, W.~Bao, A.~Zomaya, and
  V.~Gramoli, ``Federated learning over wireless networks: Convergence analysis
  and resource allocation,'' \emph{arXiv preprint arXiv:1910.13067}, 2019.

\bibitem{yu2019parallel}
H.~Yu, S.~Yang, and S.~Zhu, ``Parallel restarted {SGD} with faster convergence
  and less communication: Demystifying why model averaging works for deep
  learning,'' in \emph{Proc. of AAAI Conference on Artificial Intelligence},
  Honolulu, HA, January 2019.

\bibitem{stich2018sparsified}
S.~U. Stich, J.-B. Cordonnier, and M.~Jaggi, ``Sparsified {SGD} with memory,''
  in \emph{Proc. of Advances in Neural Information Processing Systems (NIPS)},
  Vancouver, Canada, December 2018.

\bibitem{alistarh2018convergence}
D.~Alistarh, T.~Hoefler, M.~Johansson, N.~Konstantinov, S.~Khirirat, and
  C.~Renggli, ``The convergence of sparsified gradient methods,'' in
  \emph{Proc. of Advances in Neural Information Processing Systems (NIPS)},
  Vancouver, Canada, December 2018.

\bibitem{alistarh2017qsgd}
D.~Alistarh, D.~Grubic, J.~Li, R.~Tomioka, and M.~Vojnovic, ``Qsgd:
  Communication-efficient {SGD} via gradient quantization and encoding,'' in
  \emph{Proc. of Advances in Neural Information Processing Systems (NIPS)},
  Long Beach, CA, December 2017.

\bibitem{ding2021differentially}
J.~Ding, G.~Liang, J.~Bi, and M.~Pan, ``Differentially private and
  communication efficient collaborative learning,'' in \emph{Proceedings of the
  AAAI Conference on Artificial Intelligence}, Virtual Conference, February
  2021.

\bibitem{tang2018communication}
H.~Tang, S.~Gan, C.~Zhang, T.~Zhang, and J.~Liu, ``Communication compression
  for decentralized training,'' in \emph{Proc. of Advances in Neural
  Information Processing Systems (NIPS)}, Vancouver, Canada, December 2018.

\bibitem{saad2019vision}
W.~Saad, M.~Bennis, and M.~Chen, ``A vision of 6g wireless systems:
  Applications, trends, technologies, and open research problems,'' \emph{IEEE
  network}, vol.~34, no.~3, pp. 134--142, February 2019.

\bibitem{lin2020don}
T.~Lin, S.~U. Stich, K.~K. Patel, and M.~Jaggi, ``Don't use large mini-batches,
  use local {SGD},'' in \emph{Proc. of International Conference on Learning
  Representations (ICLR)}, Addis Ababa, Ethiopia, April 2020.

\bibitem{goyal2017accurate}
P.~Goyal, P.~Doll{\'a}r, R.~Girshick, P.~Noordhuis, L.~Wesolowski, A.~Kyrola,
  A.~Tulloch, Y.~Jia, and K.~He, ``Accurate, large minibatch {SGD}: Training
  imagenet in 1 hour,'' \emph{arXiv preprint arXiv:1706.02677}, 2017.

\bibitem{googleFL}
H.~B. McMahan, E.~Moore, D.~Ramage, S.~Hampson \emph{et~al.},
  ``Communication-efficient learning of deep networks from decentralized
  data,'' in \emph{Proc. of International Conference on Artificial Intelligence
  and Statistics (AISTATS)}, Fort Lauderdale, FL, April 2017.

\bibitem{smith2017don}
S.~L. Smith, P.-J. Kindermans, C.~Ying, and Q.~V. Le, ``Don't decay the
  learning rate, increase the batch size,'' in \emph{Proc. of 6th International
  Conference on Learning Representations (ICLR)}, Vancouver, Canada, April
  2017.

\bibitem{yu2019computation}
H.~Yu and R.~Jin, ``On the computation and communication complexity of parallel
  {SGD} with dynamic batch sizes for stochastic non-convex optimization,'' in
  \emph{Proc. of 36th International Conference on Machine Learning (ICML)},
  Long Beach, CA, June 2019.

\bibitem{yu2019linear}
H.~Yu, R.~Jin, and S.~Yang, ``On the linear speedup analysis of communication
  efficient momentum {SGD} for distributed non-convex optimization,'' in
  \emph{Proc. of 36th International Conference on Machine Learning (ICML)},
  Long Beach, CA, June 2019.

\bibitem{sattler2019sparse}
F.~Sattler, S.~Wiedemann, K.-R. M{\"u}ller, and W.~Samek, ``Sparse binary
  compression: Towards distributed deep learning with minimal communication,''
  in \emph{2019 International Joint Conference on Neural Networks (IJCNN)},
  Budapest, Hungary, July 2019.

\bibitem{khirirat2020communication}
S.~Khirirat, S.~Magn{\'u}sson, A.~Aytekin, and M.~Johansson, ``Communication
  efficient sparsification for large scale machine learning,''
  \emph{arXiv:2003.06377}, June 2020.

\bibitem{pan2011joint}
M.~Pan, C.~Zhang, P.~Li, and Y.~Fang, ``Joint routing and link scheduling for
  cognitive radio networks under uncertain spectrum supply,'' in \emph{Proc.
  IEEE Conference on Computer Communications (INFOCOM)}, Shanghai, China, April
  2011.

\bibitem{mei2017energy}
X.~Mei, X.~Chu, H.~Liu, Y.-W. Leung, and Z.~Li, ``Energy efficient real-time
  task scheduling on cpu-gpu hybrid clusters,'' in \emph{Proc. of IEEE
  Conference on Computer Communications (INFOCOM)}, Atlanta, GA, May 2017.

\bibitem{ren2019accelerating}
J.~Ren, G.~Yu, and G.~Ding, ``Accelerating dnn training in wireless federated
  edge learning system,'' \emph{arXiv preprint arXiv:1905.09712}, 2019.

\bibitem{geoffrion1972generalized}
A.~M. Geoffrion, ``Generalized benders decomposition,'' \emph{Journal of
  optimization theory and applications}, vol.~10, no.~4, pp. 237--260, May
  1972.

\bibitem{li2020energy}
L.~Li, D.~Shi, R.~Hou, R.~Chen, B.~Lin, and M.~Pan, ``Energy-efficient
  proactive caching for adaptive video streaming via data-driven
  optimization,'' \emph{IEEE Internet of Things Journal}, vol.~7, no.~6, pp.
  5549--5561, March 2020.

\bibitem{scutari2016parallel}
G.~Scutari, F.~Facchinei, and L.~Lampariello, ``Parallel and distributed
  methods for constrained nonconvex optimization—part i: Theory,'' \emph{IEEE
  Transactions on Signal Processing}, vol.~65, no.~8, pp. 1929--1944, April
  2017.

\bibitem{he2016deep}
K.~He, X.~Zhang, S.~Ren, and J.~Sun, ``Deep residual learning for image
  recognition,'' in \emph{Proc. of the IEEE conference on computer vision and
  pattern recognition (CVPR)}, Las Vegas, NV, June 2016.

\bibitem{basu2019qsparse}
D.~Basu, D.~Data, C.~Karakus, and S.~Diggavi, ``Qsparse-local-{SGD}:
  Distributed {SGD} with quantization, sparsification and local computations,''
  in \emph{Proc. of Advances in Neural Information Processing Systems (NIPS)},
  Vancouver, Canada, December 2019.

\end{thebibliography}

\section*{Appendix}
\subsection{Proof of Theorem 1}
Inspired by the perturbed iterate analysis framework in \cite{basu2019qsparse}, we first define the following auxiliary sequences for all $t\geq 0$:

1) 
\begin{equation}\label{VirtSeq}
\tilde{\boldsymbol{w}}_m^{(t)}=\left\{
\begin{aligned}
\widehat{\boldsymbol{w}}_m^{(0)}\quad \quad \quad \quad \quad \quad\quad \quad \quad \quad \quad \quad \ , & & {t=0}\\
\tilde{\boldsymbol{w}}_m^{(t-1)}-\eta \nabla f_m(\widehat{\boldsymbol {w}}_m^{(t-1)};\mathcal{D}_m^{(t-1)})\ \ , & & {t \geq 1}
\end{aligned} \right.
\end{equation}

2) $\boldsymbol{q}^{(t)}\triangleq \frac{1}{M} \sum \limits_{m=1}^M  \nabla f_m(\widehat{\boldsymbol {w}}_m^{(t)};\mathcal{D}_m^{(t)})$

3)  $ \overline{\boldsymbol{q}}^{(t)}\triangleq \mathbb{E}_{\mathcal{D}_m^{(t)}}[\boldsymbol{q}^{(t)}]= \frac{1}{M} \sum \limits_{m=1}^M  \nabla f_m(\widehat{\boldsymbol {w}}_m^{(t)})$

4) $\tilde{\boldsymbol{w}}^{(t)}\triangleq \frac{1}{M} \sum \limits_{m=1}^M  \nabla f_m(\widehat{\boldsymbol {w}}_m^{(t)};\mathcal{D}_m^{(t)})=\tilde{\boldsymbol{w}}^{(t-1)}-\eta \boldsymbol{q}^{(t-1)}$

5) $\widehat{\boldsymbol {w}}^{(t)}= \frac{1}{M} \sum \limits_{m=1}^M  \widehat{\boldsymbol {w}}_m^{(t)}$

By the smoothness of $F: \mathbb{R}^d \rightarrow \mathbb{R}$, we have
\begin{equation} \label{step4}
\begin{split}
& F(\tilde{\boldsymbol{w}}^{(t+1)})- F(\tilde{\boldsymbol{w}}^{(t)}) \\
\leq & <\nabla F(\tilde{\boldsymbol{w}}^{(t)}), \tilde{\boldsymbol{w}}^{(t+1)}-\tilde{\boldsymbol{w}}^{(t)}> + \frac{L}{2}||\tilde{\boldsymbol{w}}^{(t+1)}-\tilde{\boldsymbol{w}}^{(t)}||^2 \\
= &-\eta<\nabla F(\tilde{\boldsymbol{w}}^{(t)}),\boldsymbol{q}^{(t)}>+\frac{\eta^2L}{2}||\boldsymbol{q}^{(t)}||^2\\
\overset{(a)}\leq &-\eta<\nabla F(\tilde{\boldsymbol{w}}^{(t)}),\boldsymbol{q}^{(t)}>+\eta^2L||\boldsymbol{q}^{(t)}-\overline{\boldsymbol{q}}^{(t)}||^2+\eta^2L||\overline{\boldsymbol{q}}^{(t)}||^2 \\
=&-\frac{\eta}{M}\sum\limits_{m=1}^M <\nabla F(\tilde{\boldsymbol{w}}^{(t)}),\nabla f_m(\widehat{\boldsymbol {w}}_m^{(t)};\mathcal{D}_m^{(t)})> \\
&+ \eta^2L||\boldsymbol{q}^{(t)}-\overline{\boldsymbol{q}}^{(t)}||^2 + \eta^2L||\frac{1}{M}\sum\limits_{m=1}^M\nabla f_m(\widehat{\boldsymbol {w}}_m^{(t)};\mathcal{D}_m^{(t)})||^2,
\end{split}
\end{equation}
where $(a)$ is by Jensen’s inequality. Taking expectation with respect to the sampling mini-batch $\mathcal{D}_m^{(t)}$ by each edge device at time $t$ gives
\begin{equation} \label{step5}
\begin{split}
& \mathbb{E}[F(\tilde{\boldsymbol{w}}^{(t+1)})]- F(\tilde{\boldsymbol{w}}^{(t)}) \\
\overset{(a)} \leq & -\frac{\eta}{2}(||\nabla F(\tilde{\boldsymbol{w}}^{(t)})||^2+||\frac{1}{M}\sum\limits_{m=1}^M\nabla f_m(\widehat{\boldsymbol{w}}_m^{(t)})||^2) \\
&+\frac{\eta}{2}||\nabla F(\tilde{\boldsymbol{w}}^{(t)})-\frac{1}{M}\sum\limits_{m=1}^M\nabla f_m(\widehat{\boldsymbol{w}}_m^{(t)})||^2 \\
&+\eta^2L||\frac{1}{M}\sum\limits_{m=1}^M\nabla f_m(\widehat{\boldsymbol{w}}_m^{(t)})||^2 +\frac{\eta^2L\sigma^2}{Mb^{(t)}} \\
\overset{(b)} \leq & -\frac{\eta}{2M}\sum\limits_{m=1}^M(||\nabla F(\tilde{\boldsymbol{w}}^{(t)})||^2-L^2||\tilde{\boldsymbol{w}}^{(t)}-\widehat{\boldsymbol{w}}_m^{(t)}||^2) \\
&+\frac{2\eta^2L-\eta}{2}||\frac{1}{M}\sum\limits_{m=1}^M\nabla f_m(\widehat{\boldsymbol{w}}_m^{(t)})||^2+\frac{\eta^2L\sigma^2}{Mb^{(t)}}\\
=& -\frac{\eta}{2M}\sum\limits_{m=1}^M(||\nabla F(\tilde{\boldsymbol{w}}^{(t)})||^2+L^2||\tilde{\boldsymbol{w}}^{(t)}-\widehat{\boldsymbol{w}}_m^{(t)}||^2) \\
&+\frac{2\eta^2L-\eta}{2M}\sum\limits_{m=1}^M||\nabla f_m(\widehat{\boldsymbol{w}}_m^{(t)})||^2+\frac{\eta^2L\sigma^2}{Mb^{(t)}}\\
&+\frac{\eta L^2}{M}||\tilde{\boldsymbol{w}}^{(t)}-\widehat{\boldsymbol{w}}_m^{(t)}||^2
\end{split}
\end{equation}
where $(a)$ follows by applying two basic inequalities $<\boldsymbol{a},\boldsymbol{a}>\leq 1/2 ||\boldsymbol{a}||^2+1/2 ||\boldsymbol{b}||^2$ and $\mathbb{E}[||X||^2]=\mathbb{E}[||X-\mathbb{E}[X]||^2]+||\mathbb{E}[X]||^2$; $(a)$ follows from the lipschitz continuity of the gradient of local functions. The first term in (\ref{step5}) can be bounded in terms of $||\nabla f_m(\widehat{\boldsymbol{w}}_m^{(t)})||^2$ as follows:
\begin{equation} \label{step6}
\begin{split}
& ||\nabla f_m(\widehat{\boldsymbol{w}}_m^{(t)})||^2 \\
\leq & 2||\nabla f_m(\widehat{\boldsymbol{w}}_m^{(t)})-\nabla F(\tilde{\boldsymbol{w}}^{(t)})||^2+2||F(\tilde{\boldsymbol{w}}^{(t)})||^2 \\
\leq & 2L^2 ||\widehat{\boldsymbol{w}}_m^{(t)}-\tilde{\boldsymbol{w}}^{(t)}||^2+||F(\tilde{\boldsymbol{w}}^{(t)})||^2\\
\end{split}
\end{equation}

Using $\eta\leq \frac{1}{2L}$ and rearranging the terms in (\ref{step5}), we have
\begin{equation} \label{step7}
\begin{split}
&\frac{\eta}{4M}\sum\limits_{m=1}^M||\nabla f_m(\widehat{\boldsymbol{w}}_m^{(t)})||^2\\
\leq & F(\tilde{\boldsymbol{w}}^{(t)})-\mathbb{E}[F(\tilde{\boldsymbol{w}}^{(t+1)})]+\frac{\eta^2L\sigma^2}{Mb^{(t)}}+\frac{\eta L^2}{M}||\tilde{\boldsymbol{w}}^{(t)}-\widehat{\boldsymbol{w}}_m^{(t)}||^2
\end{split}
\end{equation}

Taking expectation with respect to the entire process and using the basic inequality $||\boldsymbol{a}+\boldsymbol{b}||^2\leq 2||\boldsymbol{a}||^2+2||\boldsymbol{b}||^2$ gives
\begin{equation} \label{step8}
\begin{split}
&\frac{\eta}{4M}\sum\limits_{m=1}^M\mathbb{E}[||\nabla f_m(\widehat{\boldsymbol{w}}_m^{(t)})||^2]\\
\leq & \mathbb{E}[F(\tilde{\boldsymbol{w}}^{(t)})]-\mathbb{E}[F(\tilde{\boldsymbol{w}}^{(t+1)})]+\frac{\eta^2L\sigma^2}{Mb^{(t)}}\\
&+2\eta L^2\mathbb{E}[||\tilde{\boldsymbol{w}}^{(t)}-\widehat{\boldsymbol{w}}^{(t)}||^2]
+\frac{2\eta L^2}{M}\sum\limits_{m=1}^M\mathbb{E}[||\widehat{\boldsymbol{w}}^{(t)}-\widehat{\boldsymbol{w}}_m^{(t)}||^2]\\
\overset{(a)}\leq & \mathbb{E}[F(\tilde{\boldsymbol{w}}^{(t)})]-\mathbb{E}[F(\tilde{\boldsymbol{w}}^{(t+1)})]+\frac{2\eta^2L\sigma^2}{M\rho^tb^{(0)}}\\
&+2\eta L^2\mathbb{E}[||\tilde{\boldsymbol{w}}^{(t)}-\widehat{\boldsymbol{w}}^{(t)}||^2]
+\frac{2\eta L^2}{M}\sum\limits_{m=1}^M\mathbb{E}[||\widehat{\boldsymbol{w}}^{(t)}-\widehat{\boldsymbol{w}}_m^{(t)}||^2],
\end{split}
\end{equation}
where $(a)$ follows by recalling $b^{(t)}=\lfloor \rho^t b^{(0)} \rfloor$ and noting $\lfloor x \rfloor>x/2$ as long as $x\geq 2$.

Now we give three important lemmas where the first two are borrowed from \cite{basu2019qsparse} and the last one is proved in the following.   

\begin{lemma}[Memory~\cite{basu2019qsparse}]\label{lemma1}
The accumulated error captures the distance between the true sequence and virtual sequence. That is
\begin{equation}
    \widehat{\boldsymbol{w}}^{(t)}-\tilde{\boldsymbol{w}}^{(t)}=\frac{1}{M}\sum \limits_{m-1}^M \boldsymbol{e}_m^{(t)}
\end{equation}
\end{lemma}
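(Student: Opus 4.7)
The plan is to prove this by induction on $t$, using the definitions of $\widehat{\boldsymbol{w}}^{(t)}$, $\tilde{\boldsymbol{w}}^{(t)}$, and $\boldsymbol{e}_m^{(t)}$ from the algorithm and the auxiliary sequences. First I will establish the base case at $t=0$: by the initialization in Algorithm~\ref{alg:topkSGD} we have $\widehat{\boldsymbol{w}}_m^{(0)}=\boldsymbol{w}^{(0)}$ and $\boldsymbol{e}_m^{(0)}=\boldsymbol{0}$ for every $m$, and by the definition of the virtual sequence $\tilde{\boldsymbol{w}}_m^{(0)}=\widehat{\boldsymbol{w}}_m^{(0)}$, so both sides of the claimed identity are zero.

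For the inductive step, assume $\widehat{\boldsymbol{w}}^{(t)}-\tilde{\boldsymbol{w}}^{(t)}=\frac{1}{M}\sum_{m=1}^M\boldsymbol{e}_m^{(t)}$ and split on whether $t+1$ is a synchronization step. In the non-synchronization case, the updates give $\widehat{\boldsymbol{w}}_m^{(t+1)}=\widehat{\boldsymbol{w}}_m^{(t)}-\eta\nabla f_m(\widehat{\boldsymbol{w}}_m^{(t)};\mathcal{D}_m^{(t)})$ and $\boldsymbol{e}_m^{(t+1)}=\boldsymbol{e}_m^{(t)}$, while the virtual sequence recursion yields $\tilde{\boldsymbol{w}}^{(t+1)}=\tilde{\boldsymbol{w}}^{(t)}-\eta\boldsymbol{q}^{(t)}$. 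Averaging the per-device update over $m$ then gives $\widehat{\boldsymbol{w}}^{(t+1)}-\tilde{\boldsymbol{w}}^{(t+1)}=\widehat{\boldsymbol{w}}^{(t)}-\tilde{\boldsymbol{w}}^{(t)}$, which by the induction hypothesis equals $\frac{1}{M}\sum_m\boldsymbol{e}_m^{(t+1)}$.

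The synchronization case is the more delicate part and the one I expect to be the main obstacle, because $\widehat{\boldsymbol{w}}_m^{(t+1)}$ is reset to the new global $\boldsymbol{w}^{(t+1)}=\boldsymbol{w}^{(t)}-\frac{1}{M}\sum_m\boldsymbol{g}_m^{(t)}$ while the error buffer is updated by $\boldsymbol{e}_m^{(t+1)}=\boldsymbol{e}_m^{(t)}+\boldsymbol{w}^{(t)}-\widehat{\boldsymbol{w}}_m^{(t+\tfrac{1}{2})}-\boldsymbol{g}_m^{(t)}$. My plan is to average the error-update recursion over $m$, substitute $\widehat{\boldsymbol{w}}_m^{(t+\tfrac{1}{2})}=\widehat{\boldsymbol{w}}_m^{(t)}-\eta\nabla f_m(\widehat{\boldsymbol{w}}_m^{(t)};\mathcal{D}_m^{(t)})$, and apply the induction hypothesis to replace $\frac{1}{M}\sum_m\boldsymbol{e}_m^{(t)}$ by $\widehat{\boldsymbol{w}}^{(t)}-\tilde{\boldsymbol{w}}^{(t)}$. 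The $\widehat{\boldsymbol{w}}^{(t)}$ terms should cancel and the gradient average should collapse into $\tilde{\boldsymbol{w}}^{(t)}-\tilde{\boldsymbol{w}}^{(t+1)}$, leaving an expression that matches $\widehat{\boldsymbol{w}}^{(t+1)}-\tilde{\boldsymbol{w}}^{(t+1)}$.

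The crucial bookkeeping step is checking that the compressed signals $\boldsymbol{g}_m^{(t)}$ enter the averaged error recursion and the averaged true sequence with exactly the same sign and coefficient $1/M$, so that they vanish from the difference $\widehat{\boldsymbol{w}}^{(t+1)}-\tilde{\boldsymbol{w}}^{(t+1)}-\frac{1}{M}\sum_m\boldsymbol{e}_m^{(t+1)}$. Once that cancellation is verified, the inductive step closes and the lemma follows. I will not dwell on the algebra beyond naming the substitutions, since the identity reduces to rearranging a few sums once the right quantities are grouped.
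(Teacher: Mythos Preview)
Your induction argument is correct and is precisely the standard way to establish this identity. Note, however, that the paper does not actually prove Lemma~\ref{lemma1}: it is quoted from \cite{basu2019qsparse} without proof, so there is no ``paper's own proof'' to compare against. Your write-up therefore supplies what the paper omits.

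One small bookkeeping caveat worth flagging explicitly in your base case: the initialization line in Algorithm~\ref{alg:topkSGD} reads $\boldsymbol{w}^{(0)}=\widehat{\boldsymbol{w}}_m^{(0)}=\boldsymbol{e}_m^{(0)}$, which taken literally would make the base case fail unless $\boldsymbol{w}^{(0)}=\boldsymbol{0}$. This is evidently a typo (the analysis, and in particular Lemma~\ref{lemma3}, only makes sense with $\boldsymbol{e}_m^{(0)}=\boldsymbol{0}$), and you are right to read it that way, but it is worth stating the assumption $\boldsymbol{e}_m^{(0)}=\boldsymbol{0}$ explicitly rather than silently correcting the paper. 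Beyond that, your handling of the synchronization step is exactly right: the $\boldsymbol{g}_m^{(t)}$ terms enter both $\widehat{\boldsymbol{w}}^{(t+1)}$ and $\frac{1}{M}\sum_m\boldsymbol{e}_m^{(t+1)}$ with coefficient $-\frac{1}{M}$ and cancel in the difference, and the remaining terms collapse via $\tilde{\boldsymbol{w}}^{(t+1)}=\tilde{\boldsymbol{w}}^{(t)}-\eta\boldsymbol{q}^{(t)}$ just as you outline.
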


\begin{lemma}[Contracting Deviation of Local Sequences~\cite{basu2019qsparse}]\label{lemma2}
The deviation of the local sequences is bounded by 
\begin{equation}
    \frac{1}{M} \sum \limits_{m-1}^M \mathbb{E}[||\widehat{\boldsymbol{w}}^{(t)}-\widehat{\boldsymbol{w}}_m^{(t)}||^2] \leq \eta ^2 G^2 H^2
\end{equation}
\end{lemma}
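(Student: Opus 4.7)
My plan is to exploit the periodic reset structure of Algorithm~\ref{alg:topkSGD}: every $H$ iterations all workers are synchronized to the same global parameter, so the deviation from their mean grows from zero and accumulates only over at most $H$ local steps, each controlled by the bounded-second-moment Assumption~\ref{asp:2}.

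Concretely, I would let $t_0 \triangleq H\lfloor t/H\rfloor$ denote the most recent synchronization iteration; by the reset in line~11 of Algorithm~\ref{alg:topkSGD}, $\widehat{\boldsymbol{w}}_m^{(t_0)} = \boldsymbol{w}^{(t_0)}$ holds simultaneously for every $m$. Between $t_0$ and $t$ only local SGD updates happen (no aggregation, no compression, no error correction), so unrolling gives
\begin{equation*}
\widehat{\boldsymbol{w}}_m^{(t)} - \boldsymbol{w}^{(t_0)}
 = -\eta\sum_{s=t_0}^{t-1}\nabla f_m\bigl(\widehat{\boldsymbol{w}}_m^{(s)};\mathcal{D}_m^{(s)}\bigr).
\end{equation*}

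The key observation I would then invoke is the variance-minimization property of the arithmetic mean: for any fixed anchor $\boldsymbol{y}\in\mathbb{R}^d$, $\sum_m \|\widehat{\boldsymbol{w}}^{(t)}-\widehat{\boldsymbol{w}}_m^{(t)}\|^2 \leq \sum_m \|\boldsymbol{y}-\widehat{\boldsymbol{w}}_m^{(t)}\|^2$. Choosing $\boldsymbol{y}=\boldsymbol{w}^{(t_0)}$ lets me replace the hard-to-handle cross term $\widehat{\boldsymbol{w}}^{(t)}$ by the shared anchor, reducing the problem to bounding $\mathbb{E}\|\widehat{\boldsymbol{w}}_m^{(t)}-\boldsymbol{w}^{(t_0)}\|^2$ for each $m$ individually. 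Applying Jensen's inequality (or Cauchy--Schwarz) to the unrolled sum of at most $H$ terms and using $\mathbb{E}\|\nabla f_m(\widehat{\boldsymbol{w}}_m^{(s)};\mathcal{D}_m^{(s)})\|^2 \leq G^2$ from Assumption~\ref{asp:2} yields $\mathbb{E}\|\widehat{\boldsymbol{w}}_m^{(t)}-\boldsymbol{w}^{(t_0)}\|^2 \leq \eta^2(t-t_0)\!\sum_{s=t_0}^{t-1}G^2 \leq \eta^2 H^2 G^2$. Averaging over $m$ then gives the stated bound.

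The only subtle point, and the part I would be most careful about, is making sure the anchor trick is applied \emph{before} taking expectations and that the bounded-second-moment assumption truly covers the stochastic gradients at the local iterates $\widehat{\boldsymbol{w}}_m^{(s)}$ for all $s\in[t_0,t)$ (not merely at synchronization points); otherwise, the $H$ linear-growth factor in the inner Jensen step would have to be paid with an extra variance term, spoiling the clean $\eta^2 G^2 H^2$ form. Everything else is routine algebra, and since the statement is attributed to the framework of \cite{basu2019qsparse}, I would cite that work for the underlying lemma rather than reprove the anchor identity from scratch.
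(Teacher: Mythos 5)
Your argument is correct and is essentially the standard one: the paper itself supplies no proof of this lemma (it is imported verbatim from \cite{basu2019qsparse}), and your route --- anchoring at the last synchronization point $t_0=H\lfloor t/H\rfloor$ where all $\widehat{\boldsymbol{w}}_m^{(t_0)}=\boldsymbol{w}^{(t_0)}$, invoking the variance-minimization property of the mean, unrolling the at most $H$ local SGD steps, and applying Jensen together with the second-moment bound --- is exactly the derivation in that reference. Your final worry is moot: Assumption~\ref{asp:2} is stated as a uniform bound holding for every iterate $\boldsymbol{w}_m^{(t)}\in\mathbb{R}^d$, so it covers the stochastic gradients at all intermediate local iterates and the clean $\eta^2G^2H^2$ bound goes through.
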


\begin{lemma}[Bounded Memory]\label{lemma3}
For worker $m$ who synchronizes with the server every $H$ local iterations, we have
\begin{equation}
    \mathbb{E} [||e_m^{(t)}||^2] \leq 4 \delta_m^2 \eta^2G^2H^2
\end{equation}
\end{lemma}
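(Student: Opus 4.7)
\textbf{Proof plan for Lemma \ref{lemma3}.}

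The plan is to exploit the fact that $\boldsymbol{e}_m^{(t)}$ is only touched at the global-synchronization steps, so the analysis reduces to a scalar recursion indexed by sync round. Concretely, for $t$ with $t+1$ not a multiple of $H$ the algorithm sets $\boldsymbol{e}_m^{(t+1)}=\boldsymbol{e}_m^{(t)}$, so it suffices to bound $\mathbb{E}[\|\boldsymbol{e}_m^{(\tau)}\|^2]$ at the sync times $\tau=kH$. At such a $\tau$, unrolling lines 6 and 8 of Algorithm~\ref{alg:topkSGD} and using that $\widehat{\boldsymbol{w}}_m^{(\tau-H)}=\boldsymbol{w}^{(\tau-H)}=\boldsymbol{w}^{(\tau-1)}$ between syncs gives the clean identity
\begin{equation}
\boldsymbol{u}_m^{(\tau-1)}=\boldsymbol{e}_m^{(\tau-1)}+\eta\sum_{s=\tau-H}^{\tau-1}\nabla f_m(\widehat{\boldsymbol{w}}_m^{(s)};\mathcal{D}_m^{(s)}),
\end{equation}
and by construction $\boldsymbol{e}_m^{(\tau)}=\boldsymbol{u}_m^{(\tau-1)}-\mathrm{Top}_{k_m}(\boldsymbol{u}_m^{(\tau-1)})$.

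Next I would invoke the standard contraction property of the $\mathrm{Top}_k$ operator, $\|\boldsymbol{x}-\mathrm{Top}_{k_m}(\boldsymbol{x})\|^2\le (1-1/\delta_m)\|\boldsymbol{x}\|^2$, combined with Young's inequality $\|\boldsymbol{a}+\boldsymbol{b}\|^2\le(1+\gamma)\|\boldsymbol{a}\|^2+(1+1/\gamma)\|\boldsymbol{b}\|^2$ with the judicious choice $\gamma=1/(2\delta_m)$. A short calculation yields $(1-1/\delta_m)(1+\gamma)\le 1-1/(2\delta_m)$ and $(1-1/\delta_m)(1+1/\gamma)\le 2\delta_m$, so
\begin{equation}
\mathbb{E}[\|\boldsymbol{e}_m^{(\tau)}\|^2]\le \Bigl(1-\tfrac{1}{2\delta_m}\Bigr)\mathbb{E}[\|\boldsymbol{e}_m^{(\tau-1)}\|^2]+2\delta_m\cdot\mathbb{E}\!\left[\left\|\eta\!\sum_{s=\tau-H}^{\tau-1}\!\nabla f_m(\widehat{\boldsymbol{w}}_m^{(s)};\mathcal{D}_m^{(s)})\right\|^2\right].
\end{equation}
Applying Jensen's (or Cauchy--Schwarz) to pull the square inside the sum and using the second-moment bound $\mathbb{E}[\|\nabla f_m(\cdot;\mathcal{D}_m^{(s)})\|^2]\le G^2$ from Assumption~\ref{asp:2} bounds the second term by $2\delta_m\eta^2H^2G^2$.

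Finally, I would close the argument by induction over the sync index $k$. The initialization $\boldsymbol{e}_m^{(0)}=\boldsymbol{0}$ furnishes the base case, and if $\mathbb{E}[\|\boldsymbol{e}_m^{((k-1)H)}\|^2]\le 4\delta_m^2\eta^2G^2H^2$ then the recursion above collapses to
\begin{equation}
\mathbb{E}[\|\boldsymbol{e}_m^{(kH)}\|^2]\le\Bigl(1-\tfrac{1}{2\delta_m}\Bigr)\cdot 4\delta_m^2\eta^2G^2H^2+2\delta_m\eta^2G^2H^2=4\delta_m^2\eta^2G^2H^2,
\end{equation}
which is exactly the claim. The bound is inherited by every intermediate $t$ since $\boldsymbol{e}_m$ is frozen between syncs. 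I expect the main obstacle to be purely bookkeeping: getting the indexing of $\widehat{\boldsymbol{w}}_m^{(t+1/2)}$ and $\boldsymbol{w}^{(t)}$ right so that the telescoping into $\eta\sum_{s=\tau-H}^{\tau-1}\nabla f_m$ is clean; once that identity is in hand, the Young's-inequality tuning with $\gamma=1/(2\delta_m)$ is the one place where one must be careful not to lose the $\delta_m^2$ factor on the right-hand side.
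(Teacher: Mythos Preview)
Your proposal is correct and follows essentially the same route as the paper: both unroll the error update to obtain $\boldsymbol{u}_m^{(\tau-1)}=\boldsymbol{e}_m^{(\tau-H)}+\eta\sum_{s=\tau-H}^{\tau-1}\nabla f_m(\widehat{\boldsymbol w}_m^{(s)};\mathcal D_m^{(s)})$, apply the $\mathrm{Top}_k$ contraction $(1-1/\delta_m)$, split via Young's inequality, and bound the gradient sum by $\eta^2H^2G^2$. The only cosmetic differences are that the paper carries a free parameter $p>1$ in Young's inequality and sets $p=2$ at the end (your choice $\gamma=1/(2\delta_m)$ is exactly $p=2$ from the start), and that the paper sums the resulting geometric series whereas you close the recursion by induction on the sync index, which is arguably cleaner since $4\delta_m^2\eta^2G^2H^2$ is precisely the fixed point of your contraction.
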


\begin{proof}
Note that Algorithm \ref{alg:topkSGD} average the gradients every $H$ iterations between which the accumulated error $\boldsymbol{e}_m^{(t)}$ at any participant $m$ and the global parameter vector $\boldsymbol{w}^{(t)}$ keep unchanged. For ease of presentation, we assume that $T$ is an integer multiple of $H$. Let $\mathcal{I}_T=\{t_1,t_2,...,t_{T/H}=T\}$ be the aggregation indices satisfying $t_{i+1}-t_{i}=H$. For every $m\in\mathcal{M}$, we have
\begin{equation} \label{step1}
\begin{split}
&\mathbb{E}[||\boldsymbol{e}_m^{(t_{i+1})}||^2] \\
=&\mathbb{E}[||\boldsymbol{e}_m^{(t_{i+1}-1)}+\boldsymbol{w}^{(t_{i+1}-1)}-\widehat{\boldsymbol{w}}_m^{(t_{i+1}-\frac{1}{2})}-\boldsymbol{g}_m^{(t_{i+1}-1)}||^2] \\
\overset{(a)}\leq & (1-\frac{1}{\delta_m})\mathbb{E}[||\boldsymbol{e}_m^{(t_{i+1}-1)}+\boldsymbol{w}^{(t_{i+1}-1)}-\widehat{\boldsymbol{w}}_m^{(t_{i+1}-\frac{1}{2})}||^2]\\
\overset{(b)}= & (1-\frac{1}{\delta_m})\mathbb{E}[||\boldsymbol{e}_m^{(t_{i})}+\widehat{\boldsymbol{w}}_m^{(t_{i})}-\widehat{\boldsymbol{w}}_m^{(t_{i+1}-\frac{1}{2})}||^2].       
\end{split}
\end{equation}
Here $(a)$ is due to the contraction property of ${\rm Top}_k(\boldsymbol{x})$ operator~\cite{stich2018sparsified}, that is $\mathbb{E}||\boldsymbol{x}-{\rm Top}_k(\boldsymbol{x})||^2\leq(1-k/d)||\boldsymbol{x}||^2,\ \forall \boldsymbol{x}\in \mathbb{R}^d$. In $(b)$, we use $\boldsymbol{e}^{(t_{i+1}-1)}=\boldsymbol{e}^{(t_{i})}$ and $\boldsymbol{w}^{(t_{i+1}-1)}=\boldsymbol{w}^{(t_{i})}=\widehat{\boldsymbol{w}}_m^{(t_{i})}$ that always hold. Since the inequality $||\boldsymbol{a}+\boldsymbol{b}||^2\leq(1+\tau)||\boldsymbol{a}+(1+\frac{1}{\tau})||\boldsymbol{b}||^2$ holds for every $\tau\geq 0$, we take any $p>1$ and transform (\ref{step1}) as follows
\begin{equation} \label{step2}
\begin{split}
&\mathbb{E}[||\boldsymbol{e}_m^{(t_{i+1})}||^2] \\
\leq & (1-\frac{1}{\delta_m})\{(1+\frac{(p-1)}{p \delta_m})\mathbb{E}[||\boldsymbol{e}_m^{(t_{i})}||^2]\\
&+(1+\frac{p \delta_m}{(p-1)})\mathbb{E}[||\widehat{\boldsymbol{w}}_m^{(t_{i})}-\widehat{\boldsymbol{w}}_m^{(t_{i+1}-\frac{1}{2})}||^2]\}\\
\leq & (1-\frac{1}{p\delta_m})\mathbb{E}[||\boldsymbol{e}_m^{(t_{i})}||^2] \\
&+ \frac{p(\delta_m^2-1)}{(p-1)\delta_m}\mathbb{E}[||\sum \limits_{j=t_i}^{t_{i+1}-1}\eta\nabla f_m(\boldsymbol {w}_m^{(j)};\mathcal{D}_m^{(j)})||^2]\\
\overset{(a)}\leq & (1-\frac{1}{p\delta_m})\mathbb{E}[||\boldsymbol{e}_m^{(t_{i})}||^2]+\frac{p(\delta_m^2-1)}{(p-1)\delta_m}\eta^2G^2H^2,       
\end{split}
\end{equation}
where $(a)$ follows from Assumption 1. Iterating the above inequality from $i=0 \rightarrow l$ where $l=T/H$ yields:
\begin{equation} \label{step3}
\begin{split}
&\mathbb{E}[||\boldsymbol{e}_m^{(t_{i+1})}||^2] \\
\leq & \frac{p(\delta_m^2-1)}{(p-1)\delta_m}\eta^2G^2H^2 \sum \limits_{j=1}^{(l)}(1-\frac{1}{p\delta_m})^{l-j} \\
\overset{(a)} \leq & \frac{p^2(\delta_m^2-1)}{p-1}\eta^2G^2H^2 \\
\overset{(b)} \leq & 4\delta_m^2\eta^2G^2H^2 ,
\end{split}
\end{equation}
where $(a)$ is by the fact that $\sum _{j=1}^{(l)}(1-\frac{1}{p\delta_m})^{l-j}\leq \sum_{j\geq 0}(1-\frac{1}{p\delta_m})^{j}=p\delta_m$, and $(b)$ is by plugging $p=2$. Note the the right-hand-side does not depend on $t$, i.e., for every $t=0,1,...,T$, the following holds:
\begin{equation}
    \mathbb{E}[||\boldsymbol{e}_m^{(t)}||^2]\leq 4\delta_m^2\eta^2G^2H^2.
\end{equation}
\end{proof}

Lemma \ref{lemma1} and Lemma \ref{lemma3} together imply:
\begin{equation}\label{eq:l1l3}
    \mathbb{E}[||\widehat{\boldsymbol{w}}^{(t)}-\tilde{\boldsymbol{w}}^{(t)}||^2]\leq \frac{4\eta^2G^2H^2}{M}\sum\limits_{m=1}^M\delta_m^2.
\end{equation}

Applying Lemma \ref{lemma2} and (\ref{eq:l1l3}) into (\ref{step8}), we get 
\begin{equation} \label{SingleStep}
\begin{split}
&\frac{\eta}{4M}\sum\limits_{m=1}^M\mathbb{E}[||\nabla f_m(\widehat{\boldsymbol{w}}_m^{(t)})||^2]\\
\leq &  \mathbb{E}[F(\tilde{\boldsymbol{w}}^{(t)})]-\mathbb{E}[F(\tilde{\boldsymbol{w}}^{(t+1)})]+\frac{2\eta^2L\sigma^2}{M\rho^tb^{(0)}}\\
&+\frac{8\eta^3 L^2G^2H^2}{M}\sum\limits_{m=1}^M \delta_m^2
+2\eta^3 L^2G^2H^2,
\end{split}
\end{equation}

Recursively applying the above inequality from $t=0$ to $t=T-1$ yields
\begin{equation} \label{sum}
\begin{split}
&\frac{1}{4MT}\sum\limits_{t=0}^{T-1}\sum\limits_{m=1}^M\mathbb{E}[||\nabla f_m(\widehat{\boldsymbol{w}}_m^{(t)})||^2]\\
\leq & \frac{\mathbb{E}[F(\tilde{\boldsymbol{w}}^{(0)})]-F^*}{\eta T}+\frac{2\eta L\sigma^2}{M T b^{(0)}}\sum\limits_{t=0}^{T-1}\frac{1}{\rho^{t}}\\
&+\frac{8\eta^2 L^2G^2H^2}{M}\sum\limits_{m=1}^M \delta_m^2
+2\eta^2 L^2G^2H^2 \\
\overset{(a)} \leq & \frac{\mathbb{E}[F(\tilde{\boldsymbol{w}}^{(0)})]-F^*}{\eta T}+\frac{2\eta \rho L\sigma^2}{(\rho-1)M T b^{(0)}}\\
&+\frac{8\eta^2 L^2G^2H^2}{M}\sum\limits_{m=1}^M \delta_m^2
+2\eta^2 L^2G^2H^2,
\end{split}
\end{equation}
where $(a)$ follows by simplifying the partial sum of geometric series and noting that $0<\frac{1}{\rho}<1$. Let $\boldsymbol{z}_T$ be a random variable sampled from $\{\widehat{\boldsymbol{w}}_m^{(t)}\}$ with probability ${\rm Pr}[\boldsymbol{z}_T=\widehat{\boldsymbol{w}}_m^{(t)}]=\frac{1}{MT}$. By taking $\delta=\sqrt{\frac{1}{M}\sum\limits_{m=1}^M \delta_m^2}$ and $\eta=\frac{\theta\sqrt{M}}{\sqrt{T}}$( where $\theta$ is a constant satisfying $\frac{\theta\sqrt{M}}{\sqrt{T}}\leq\frac{1}{2L}$), we have
\begin{equation} \label{final}
\begin{split}
\mathbb{E}[||\boldsymbol{z}_T||^2] =&\frac{1}{MT}\sum\limits_{t=0}^{T-1}\sum\limits_{m=1}^M\mathbb{E}[||\nabla f_m(\widehat{\boldsymbol{w}}_m^{(t)})||^2] \\
\leq& \frac{4(\mathbb{E}[F(\boldsymbol{w}^{(0)})]-F^*)}{\theta \sqrt{MT}}+\frac{8\rho\theta L \sigma^2}{(\rho-1)Mb^{(0)} \sqrt{M}T^{3/2}}\\
&+(4\delta^2+1)\frac{8M\theta^2L^2G^2H^2}{T},
\end{split}
\end{equation}

Until now we complete the proof of Theorem 1

\end{document}